\definecolor{darkgreen}{rgb}{0.4,0.8,0}
\numberwithin{equation}{section}
\newtheorem{theorem}{Theorem}[section]
\newtheorem{prop}[theorem]{Proposition}
\newtheorem{corollary}[theorem]{Corollary}
\theoremstyle{definition}
\definecolor{darkblue}{rgb}{0,0,0.6}
\newcommand{\R}{\mathbb R}
\newcommand{\LeftEqNo}{\let\veqno\@@leqno}
\newcommand{\N}{\ensuremath{\mathbb{N}}}
\newcommandx{\Vnorm}[2][1=V]{\| #2 \|_{#1}}
\newcommandx{\VnormEq}[2][1=V]{\left\| #2 \right\|_{#1}}
\newcommandx{\norm}[2][1=]{\ifthenelse{\equal{#1}{}}{\left\vert #2 \right\vert}{\left\vert #2 \right\vert^{#1}}}
\newcommandx{\normLigne}[2][1=]{\ifthenelse{\equal{#1}{}}{\Vert #2 \Vert}{\Vert #2\Vert^{#1}}}
\newcommandx\probaMarkovTilde[2][2=]
\def\ie{\textit{i.e.}}
\def\eqsp{\;}
\newcommandx{\weight}[2][2=n]{\omega_{#1,#2}^N}
\newcommandx\sequence[3][2=,3=]
\newcommandx\sequenceD[3][2=,3=]
\newcommandx{\sequencen}[2][2=n\in\N]{\ensuremath{\{ #1_n, \eqsp #2 \}}}
\newcommandx\sequenceDouble[4][3=,4=]
\newcommandx{\sequencenDouble}[3][3=n\in\N]{\ensuremath{\{ (#1_{n},#2_{n}), \eqsp #3 \}}}
\newcommand{\opnorm}[1]{{\left\vert\kern-0.25ex\left\vert\kern-0.25ex\left\vert #1 
    \right\vert\kern-0.25ex\right\vert\kern-0.25ex\right\vert}}
\newcommandx{\CPE}[3][1=]{{\mathbb E}_{#1}\left[\left. #2 \, \middle \vert \, #3 \right. \right]} 
\newcommandx{\CPELigne}[3][1=]{{\mathbb E}_{#1}[\left. #2 \,  \vert \, #3 \right. ]} 
\newcommandx{\CPVar}[3][1=]{\mathrm{Var}^{#3}_{#1}\left\{ #2 \right\}}
\newcommand{\CPP}[3][]
{\ifthenelse{\equal{#1}{}}{{\mathbb P}\left(\left. #2 \, \right| #3 \right)}{{\mathbb P}_{#1}\left(\left. #2 \, \right | #3 \right)}}
\newcommandx{\osc}[2][1=]{\mathrm{osc}_{#1}(#2)}
\newcommand\coupling[2]{\Gamma(\mu,\nu)}
\renewcommand{\geq}{\geqslant}
\renewcommand{\leq}{\leqslant}
\newcommandx{\wasserstein}[3][1=\distance,3=]{\mathscr{W}_{#1}^{#3}\left(#2\right)}
\newcommandx{\wassersteinLigne}[3][1=\distance,3=]{\mathscr{W}_{#1}^{#3}(#2)}
\newcommandx{\wassersteinD}[1][1=\distance]{\mathscr{W}_{#1}}
\newcommandx{\wassersteinDLigne}[1][1=\distance]{\mathscr{W}_{#1}}
\def\I{\boldsymbol{\mathrm{I}}}
\def\bfo{\mathbf{o}}
\newcommandx{\Voi}[1][1=i]{\mathfrak{V}_{\bfo,#1}}
\newcommandx{\Vlyapc}[2][1=\bfo,2=i]{\mathfrak{V}_{#1,#2}}
\def\Wlyap{\mathfrak{W}}
\def\bfomega{\boldsymbol{\omega}}
\newcommandx{\Woi}[1][1=i]{\Wlyap_{\bfomega,#1}}
\newcommandx{\Wlyapc}[2][1=\bfomega,2=i]{\Wlyap_{#1,#2}}
\definecolor{darkblue}{rgb}{0,0,.4}
\providecommand{\State}{\STATE}
\newcommand{\E}{\mathbb{E}}
\newcommand{\simiid}{\overset{\mathrm{iid}}{\sim}}
\title{Iterative Tilting for Diffusion Fine-Tuning}
\author{
Jean Pachebat\textsuperscript{1} \quad
Giovanni Conforti\textsuperscript{2} \quad
Alain Durmus\textsuperscript{1} \quad
Yazid Janati\textsuperscript{3}
\\[0.8em]
{\normalsize \textsuperscript{1}CMAP, \'Ecole Polytechnique}\\
{\normalsize \textsuperscript{2}Universit\`a degli Studi di Padova}\\
{\normalsize \textsuperscript{3}Institute of Foundation Models}
}
\date{}
\begin{document}

\maketitle

\begin{abstract}
We introduce iterative tilting, a gradient-free method for fine-tuning diffusion models toward reward-tilted distributions. The method decomposes a large reward tilt $\exp(\lambda r)$ into $N$ sequential smaller tilts, each admitting a tractable score update via first-order Taylor expansion. This requires only forward evaluations of the reward function and avoids backpropagating through sampling chains. We validate on a two-dimensional Gaussian mixture with linear reward, where the exact tilted distribution is available in closed form.
\end{abstract}

\section{Introduction}

Diffusion models are now a standard tool for generative modelling in images, audio, and related domains \citep{sohl-dickstein2015,ho2020,song2021}.
They learn a time-dependent score (or denoiser) that inverts a simple noising process and produces new samples from noise.
Following their success, it has been proposed to fine-tune pretrained diffusion models, taking into account new data preferences and/or constraints by simply shifting the learnt weights accordingly.

To achieve this, ideas from language-model alignment have recently started to be applied to diffusion:
Either shifting the pre-trained weights using methods based on reward models \citep{xu2023b} or reinforcement learning policies \citep{black2024a,fan2023}, or adapting the architecture using lightweight adapters such as LoRA \citep{hu2021}.
In addition, diffusion-specific techniques like classifier-free guidance \citep{ho2022} or improved noise schedulers \citep{nichol2021a} provide complementary control mechanisms.

In this paper we introduce \textit{iterative tilting}, a gradient-free method for reward-based fine-tuning that shifts a diffusion model toward a reward-tilted density without differentiating through the reward function.
During the preparation of this work, a concurrent paper was submitted to ICLR \citep{anonymous2025tilt}. Their method is essentially the same as ours but formulated using the stochastic interpolants framework and developed in continuous time. In contrast, our approach is based on a diffusion framework and is derived in discrete time.

Section~\ref{sec:preliminaries} recalls the theoretical framework of Denoising Diffusion Models (DDM) and reviews existing reward fine-tuning formulations, Section~\ref{sec:outline} develops the Iterative Tilting method from a theoretical standpoint, and Section~\ref{sec:experiments} reports controlled experiments on a Gaussian synthetic fine-tuning task, before we conclude.

\section{Preliminaries on Denoising Diffusion Models and Fine-Tuning}
\label{sec:preliminaries}

\subsection{Diffusion models}
\label{subsec:diffusion_models}

Denoising diffusion models (DDMs) \citep{sohl-dickstein2015,song2019,ho2020} define a generative process for a data distribution $p_0$ by constructing a continuous probability path $(p_t)_{t\in[0,1]}$ of distributions between $p_0$ and a tractable reference $p_1 := \mathrm{N}(0, \mathrm{I}_d)$.

\subsubsection{Forward noising process}

The path is defined via the marginals $p_t = \mathrm{Law}(X_t)$, where
\begin{equation}
    X_t = \alpha_t X_0 + \sigma_t X_1, \qquad X_0 \sim p_0, \quad X_1 \sim \mathrm{N}(0, \mathrm{I}_d).
    \label{eq:forward_interpolation}
\end{equation}
Here $X_0$ and $X_1$ are independent, and $(\alpha_t)_{t\in[0,1]}$ and $(\sigma_t)_{t\in[0,1]}$ are deterministic schedules such that $\alpha_t$ is non-increasing, $\sigma_t$ is non-decreasing, with boundary conditions $(\alpha_0, \sigma_0) = (1, 0)$ and $(\alpha_1, \sigma_1) = (0, 1)$. The path $(p_t)_{t\in[0,1]}$ thus defines an interpolation that gradually transforms the clean data distribution $p_0$ into the Gaussian reference $p_1$.

From \eqref{eq:forward_interpolation}, the conditional distribution of $X_t$ given $X_0 = x_0$, denoted $q_{t|0}$, is Gaussian:
\begin{equation}
    q_{t|0}(x_t \mid x_0) = \mathrm{N}(x_t;\, \alpha_t x_0,\, \sigma_t^2 \mathrm{I}_d).
    \label{eq:forward_kernel}
\end{equation}

Regarding the choices of schedules $(\alpha_t, \sigma_t)$, two standard choices are the \emph{variance-preserving (VP)} schedule with $\alpha_t^2 + \sigma_t^2 = 1$ \citep{ho2020,dhariwal2021}, which ensures $\mathrm{Var}(X_t) = \mathrm{Var}(X_0)$ when $X_0$ has unit variance, and the \emph{linear (flow matching)} schedule with $(\alpha_t, \sigma_t) = (1-t, t)$ \citep{lipman2023,esser2024}, corresponding to a straight-line interpolation between data and noise.

\subsubsection{Reverse process and DDIM transitions}
\label{subsec:ddim}

To generate new samples, DDMs simulate a time-reversed Markov chain. Given a increasing sequence $(t_k)_{k=0}^K$ of time steps with $t_K = 1$ and $t_0 = 0$, reverse transitions are iteratively applied to map a sample from $p_{t_{k+1}}$ to one from $p_{t_k}$, progressively denoising until convergence to $p_0$.

\paragraph{DDIM framework.}
The DDIM framework \citep{song2021ddim} introduces a general family of reverse transitions. It defines a schedule $(\eta_t)_{t\in[0,1]}$ satisfying $\eta_t \leq \sigma_t$ for all $t \in [0,1]$, along with transition densities given for $s < t$ by
\begin{equation}
    p_{s|t}^\eta(x_s \mid x_t) = \E\left[ q_{s|0,1}^\eta(x_s \mid X_0, X_1) \,\middle|\, X_t = x_t \right],
    \label{eq:ddim_transition}
\end{equation}
where
\begin{equation}
    q_{s|0,1}^\eta(x_s \mid x_0, x_1) := \mathrm{N}\!\left(x_s;\, \alpha_s x_0 + \sqrt{\sigma_s^2 - \eta_s^2}\, x_1,\, \eta_s^2 \mathrm{I}_d\right)
    \label{eq:ddim_bridge}
\end{equation}
and the random variables $(X_0, X_t, X_1)$ are defined as in \eqref{eq:forward_interpolation}. By construction, this family satisfies the marginalisation property $p_s(x_s) = \int p_{s|t}^\eta(x_s \mid x_t)\, p_t(x_t)\, \mathrm{d}x_t$: by the law of total expectation, the right-hand side equals $\E_{(X_0, X_1)}[q_{s|0,1}^\eta(x_s \mid X_0, X_1)]$, and since $X_1 \sim \mathrm{N}(0, \mathrm{I}_d)$ is independent of $X_0$, sampling from the bridge \eqref{eq:ddim_bridge} yields $X_s = \alpha_s X_0 + \sqrt{\sigma_s^2 - \eta_s^2}\, X_1 + \eta_s \epsilon$ with $\epsilon \sim \mathrm{N}(0, \mathrm{I}_d)$; the sum $\sqrt{\sigma_s^2 - \eta_s^2}\, X_1 + \eta_s \epsilon$ has variance $\sigma_s^2 \mathrm{I}_d$, so $X_s \stackrel{d}{=} \alpha_s X_0 + \sigma_s \tilde{X}_1$ with $\tilde{X}_1 \sim \mathrm{N}(0, \mathrm{I}_d)$, matching the definition of $p_s$. This ensures that $(p_{t_k|t_{k+1}}^\eta)_{k=0}^{K-1}$ defines a consistent set of reverse transitions.

\paragraph{Denoiser.}
The transitions \eqref{eq:ddim_transition} are intractable and have to be approximated.
To this end, it has been proposed to replace $X_0$ and $X_1$ by their conditional expectations given $X_t$. Let $\hat{x}_0^\theta(\cdot, t)$ and $\hat{x}_1^\theta(\cdot, t)$ denote parametric estimators of
\begin{equation}
    \hat{x}_0(x_t, t) := \E[X_0 \mid X_t = x_t], \qquad \hat{x}_1(x_t, t) := \E[X_1 \mid X_t = x_t].
    \label{eq:denoiser_def}
\end{equation}
From the forward interpolation \eqref{eq:forward_interpolation}, these two quantities are related by $\hat{x}_1(x_t, t) = (x_t - \alpha_t \hat{x}_0(x_t, t))/\sigma_t$, yielding the parametric relationship
\begin{equation}
    \hat{x}_1^\theta(x_t, t) = \frac{x_t - \alpha_t \hat{x}_0^\theta(x_t, t)}{\sigma_t}, \qquad \hat{x}_0^\theta(x_t, t) = \frac{x_t - \sigma_t \hat{x}_1^\theta(x_t, t)}{\alpha_t}.
    \label{eq:denoiser_relationship}
\end{equation}
The parametric DDIM transition then becomes
\begin{equation}
    p_{t_k|t_{k+1}}^{\eta,\theta}(x_{t_k} \mid x_{t_{k+1}}) := q_{t_k|0,1}^\eta\!\left(x_{t_k} \mid \hat{x}_0^\theta(x_{t_{k+1}}, t_{k+1}),\, \hat{x}_1^\theta(x_{t_{k+1}}, t_{k+1})\right).
    \label{eq:ddim_parametric}
\end{equation}
For $k = 0$, the transition $p_{0|t_1}^{\eta,\theta}(\cdot \mid x_{t_1})$ is simply defined as the Dirac mass at $\hat{x}_0^\theta(x_{t_1}, t_1)$.

\paragraph{Sampling procedure.}
Given timesteps $(t_k)_{k=0}^K$ with $t_K = 1$ and $t_0 = 0$, DDIM sampling is summarised in Algorithm~\ref{alg:ddim}. Setting $\eta_t = 0$ yields deterministic sampling; setting $\eta_t = \sigma_t\sqrt{1 - \alpha_t^2/\alpha_s^2}$ (for the VP schedule) recovers the stochastic DDPM sampler \citep{ho2020}.

\begin{algorithm}[t]
    \caption{DDIM Sampling}
    \label{alg:ddim}
    \begin{algorithmic}[1]
        \Require Denoiser $\hat{x}_1^\theta$, timesteps $(t_k)_{k=0}^K$ with $t_K = 1$, $t_0 = 0$, variance schedule $(\eta_{t_k})_{k=0}^{K-1}$
        \State Sample $x_{t_K} \sim \mathrm{N}(0, \mathrm{I}_d)$
        \For{$k = K-1, \ldots, 1$}
            \State $\hat{x}_1 \leftarrow \hat{x}_1^\theta(x_{t_{k+1}}, t_{k+1})$ \Comment{Predict $X_1$}
            \State $\hat{x}_0 \leftarrow (x_{t_{k+1}} - \sigma_{t_{k+1}} \hat{x}_1)/\alpha_{t_{k+1}}$ \Comment{Estimate clean sample}
            \State $\mu_{t_k} \leftarrow \alpha_{t_k} \hat{x}_0 + \sqrt{\sigma_{t_k}^2 - \eta_{t_k}^2}\, \hat{x}_1$
            \State Sample $z \sim \mathrm{N}(0, \mathrm{I}_d)$
            \State $x_{t_k} \leftarrow \mu_{t_k} + \eta_{t_k}\, z$
        \EndFor
        \State $x_0 \leftarrow (x_{t_1} - \sigma_{t_1} \hat{x}_1^\theta(x_{t_1}, t_1))/\alpha_{t_1}$ \Comment{Final denoising step}
        \State \Return $x_0$
    \end{algorithmic}
\end{algorithm}

\subsubsection{Score function and training}
\label{subsec:score_training}

\paragraph{Score and denoiser relationship.}
The \emph{score} of the noised distribution is $\nabla \log p_t(\cdot)$. For the Gaussian forward kernel \eqref{eq:forward_kernel}, the score admits a closed-form expression in terms of the denoiser. Exchanging expectation and gradient under standard regularity assumptions:
\begin{equation}
    \nabla_{x_t} \log p_t(x_t) = \E\left[\nabla_{x_t} \log q_{t|0}(x_t \mid X_0) \,\middle|\, X_t = x_t\right] = -\frac{\hat{x}_1(x_t, t)}{\sigma_t}.
    \label{eq:score_from_denoiser}
\end{equation}
Thus, the denoiser $\hat{x}_1^\theta$ directly provides a score estimator $s_\theta(x_t, t) = -\hat{x}_1^\theta(x_t, t)/\sigma_t$, and conversely $\hat{x}_1^\theta(x_t, t) = -\sigma_t s_\theta(x_t, t)$.

\paragraph{Training objective.}
The denoiser can be trained by regressing either $X_0$ or $X_1$ from the noised sample $X_t = \alpha_t X_0 + \sigma_t X_1$. The $X_0$-prediction loss reads
\begin{equation}
    \mathrm{L}_{X_0}(\theta) = \int_0^1 \E_{X_0 \sim p_0, X_1 \sim \mathrm{N}(0, \mathrm{I}_d)} \left\| \hat{x}_0^\theta(\alpha_t X_0 + \sigma_t X_1, t) - X_0 \right\|^2 \mathrm{d}t,
    \label{eq:x0_loss}
\end{equation}
while the $X_1$-prediction loss is
\begin{equation}
    \mathrm{L}_{X_1}(\theta) = \int_0^1 \E_{X_0 \sim p_0, X_1 \sim \mathrm{N}(0, \mathrm{I}_d)} \left\| \hat{x}_1^\theta(\alpha_t X_0 + \sigma_t X_1, t) - X_1 \right\|^2 \mathrm{d}t.
    \label{eq:x1_loss}
\end{equation}
Since $\hat{x}_1^\theta = -\sigma_t s_\theta$ by \eqref{eq:score_from_denoiser}, the $X_1$-prediction loss is equivalent to denoising score matching \citep{hyvarinen2005,vincent2011}: minimising \eqref{eq:x1_loss} amounts to learning the score $\nabla \log p_t$. In practice, the integral is approximated via Monte Carlo: sample $t \sim \mathrm{Unif}[0,1]$, $x_0 \sim p_0$, $x_1 \sim \mathrm{N}(0, \mathrm{I}_d)$, form $x_t = \alpha_t x_0 + \sigma_t x_1$, and regress either $x_0$ or $x_1$.

\subsubsection{Connection to SDEs}
\label{subsec:sde_connection}

The interpolation \eqref{eq:forward_interpolation} can be viewed as discretising a continuous-time process. Given a \emph{noise schedule} $\beta(t) > 0$, consider the It\^o SDE
\begin{equation}
    \mathrm{d}X_t = -\tfrac{1}{2}\beta(t) X_t\, \mathrm{d}t + \sqrt{\beta(t)}\, \mathrm{d}W_t, \qquad X_0 \sim p_0,
    \label{eq:forward_sde}
\end{equation}
where $(W_t)_{t\geq 0}$ is a standard Brownian motion in $\R^d$. For the VP schedule, setting $\alpha_t = \exp(-(1/2)\int_0^t \beta(s)\,\mathrm{d}s)$ and $\sigma_t = \sqrt{1 - \alpha_t^2}$, the marginal $\mathrm{Law}(X_t)$ coincides with $p_t$ from \eqref{eq:forward_interpolation}. The time-reversed process satisfies \citep{anderson1982,song2021}
\begin{equation}
    \mathrm{d}\bar{X}_t = \left[-\tfrac{1}{2}\beta(t)\bar{X}_t + \beta(t) \nabla_{x} \log p_t(\bar{X}_t)\right] \mathrm{d}t + \sqrt{\beta(t)}\, \mathrm{d}\bar{W}_t,
    \label{eq:reverse_sde}
\end{equation}
where $\bar{X}_t := X_{1-t}$ and $\bar{W}_t$ is a standard Brownian motion. The deterministic \emph{probability-flow ODE} \citep{song2021} with identical marginals reads
\begin{equation}
    \mathrm{d}x_t = \left[-\tfrac{1}{2}\beta(t) x_t + \tfrac{1}{2}\beta(t) \nabla_{x} \log p_t(x_t)\right] \mathrm{d}t.
    \label{eq:probability_flow_ode}
\end{equation}
DDIM with $\eta_t = 0$ can be seen as a discretisation of \eqref{eq:probability_flow_ode}, while DDPM corresponds to \eqref{eq:reverse_sde}.

\subsection{Fine-tuning formulations}
\label{subsec:fine_tuning_formulations}

Given a positive reward function $r$ and a pretrained diffusion model sampling from the base distribution $p_0^{\mathrm{b}}$, we aim to fine-tune the model to sample from the \emph{tilted distribution}, defined for $x\in \R^d$:
\begin{equation}
    p_0^{\lambda r}(x) = Z_{\lambda r}^{-1} \exp\!\left(\lambda r(x)\right) p_0^{\mathrm{b}}(x),
    \label{eq:reward_tilt}
\end{equation}
where $\lambda > 0$ is the tilt strength and $Z_{\lambda r} = \int \exp(\lambda r(x))\, p_0^{\mathrm{b}}(x)\, \mathrm{d}x$ is the normalising constant. This tilted distribution emphasises regions with high reward. Several formulations have been developed to tackle this goal.

\paragraph{Direct reward alignment (DRaFT).}
DRaFT \citep{clark2024} treats the diffusion sampler as a differentiable computation graph.
For network parameters $\theta$, the reverse sampling process is a deterministic function of $\theta$ and the i.i.d.\ Gaussian noises $\xi=(Z_K,\ldots, Z_1)$, denoted $(\theta,\xi) \mapsto \psi_0(\theta,\xi)$, \ie, $\psi_0(\theta,\xi) = \hat{X}_0$ where $\hat{X}_0$ is the output of Algorithm~\ref{alg:ddim}.

DRaFT maximises the expected reward:
\begin{equation}
    J(\theta) = \E_\xi\left[r\!\left(\psi_0(\theta,\xi)\right)\right]
    \label{eq:draft_objective}
\end{equation}
by backpropagating the reward gradient through the entire chain of reverse steps:
\begin{equation}
    \nabla_\theta J(\theta) = \E_\xi\left[\left(\frac{\partial \psi_0(\theta,\xi)}{\partial \theta}\right)^\top \nabla_{x} r\!\left(\psi_0(\theta,\xi)\right)\right].
    \label{eq:draft_grad}
\end{equation}

Full backpropagation is memory-intensive. DRaFT-$K$ reduces cost by differentiating only through the final $K$ steps (typically $K \in \{1, 4\}$), detaching earlier gradients. DRaFT-LV further refines $K = 1$ with a control-variate correction for lower variance while remaining unbiased.

\paragraph{Stochastic optimal control and Adjoint Matching.}
An alternative approach casts fine-tuning as stochastic optimal control (SOC) \citep{uehara2024b,tang2024,domingo-enrich2024b}. Rather than backpropagating through sampling (as in DRaFT), SOC methods add a drift control $u_t$ to shift the generative process toward high-reward regions. For a controlled SDE $\mathrm{d}X_t = [b(X_t, t) + g(t) u_t(X_t)]\, \mathrm{d}t + g(t)\, \mathrm{d}W_t$, where $g(t)$ denotes the diffusion coefficient (distinct from the noise schedule $\sigma_t$ in \eqref{eq:forward_interpolation}), the objective balances control effort against terminal reward:
\begin{equation}
    \min_u\; \E\left[\tfrac{1}{2}\int_0^1 \|u_t(X_t)\|^2\, \mathrm{d}t - \lambda\, r(X_1)\right].
    \label{eq:soc_objective}
\end{equation}
By Girsanov's theorem, the quadratic control cost equals the KL divergence between controlled and base processes. However, naïvely solving \eqref{eq:soc_objective} yields $p^*(X_0, X_1) \propto p^{\mathrm{base}}(X_0, X_1) \exp(\lambda r(X_1) + V(X_0, 0))$, where $V$ is the value function. This does not marginalize to the tilted distribution \eqref{eq:reward_tilt} unless the initial noise $X_0$ and generated sample $X_1$ are independent.

\cite{domingo-enrich2024b} show that a \emph{memoryless} diffusion coefficient, one that ensures $X_0 \perp X_1$, removes the value function bias. Under this schedule, the optimal control is $u_t^\star(x) = g(t)^\top \tilde{a}_t(x)$, where the \emph{lean adjoint} $\tilde{a}_t$ satisfies the backward ODE
\begin{equation}
    \tfrac{\mathrm{d}}{\mathrm{d}t}\tilde{a}_t = -[\nabla_x b(x, t)]^\top \tilde{a}_t, \qquad \tilde{a}_1 = \lambda \nabla_x r(X_1).
    \label{eq:am_adjoint}
\end{equation}
Adjoint Matching learns a parametric control $u_\phi$ by minimizing:
\begin{equation}
    \mathrm{L}_{\mathrm{AM}}(\phi) = \E\left[\int_0^1 \|u_\phi(X_t, t) + g(t)^\top \tilde{a}_t(X_t)\|^2\, \mathrm{d}t\right].
    \label{eq:am_loss}
\end{equation}
After training, the learned control is absorbed into the drift, yielding a fine-tuned score or velocity field. Sampling can then use any diffusion coefficient, including deterministic sampling with $g(t) = 0$.

\section{Iterative Tilting Method for Fine-Tuning Diffusion Models}
\label{sec:outline}

The methods reviewed in Section~\ref{subsec:fine_tuning_formulations} share a common limitation: they require differentiating through the reward function $r$. In DRaFT, reward gradients appear explicitly in \eqref{eq:draft_grad} via backpropagation through the sampling chain. In stochastic optimal control, they appear in the adjoint terminal condition $a_0 = \lambda \nabla_x r(X_0)$. When $r$ is a large pretrained model (such as a vision transformer or human preference classifier trained on millions of examples), computing these gradients becomes prohibitively expensive, both in memory and compute. Moreover, many reward functions of practical interest are non-differentiable or defined only through black-box evaluations.

We introduce \emph{iterative tilting}, a method that targets the reward-tilted distribution \eqref{eq:reward_tilt} using only forward evaluations of $r$, without any gradient computation. The key idea is to construct a path of intermediate distributions $(p_0^k)_{k=0}^N$ connecting the base $p_0^{\mathrm{b}}$ to the tilted $p_0^{\lambda r}$, each representing a gentle reweighting of the previous one. At each step, we reweight samples from the current distribution toward higher rewards, gradually approaching the target along this path. This sequential approach trades off a single expensive gradient-based update for multiple cheaper gradient-free updates.

This section develops the method in continuous time, which makes the structure of the approximation explicit and connects naturally to the score-based and control perspectives introduced in Section~\ref{sec:preliminaries}.

\subsection{Score of the tilted distribution}

Suppose we have a data distribution $X_0 \sim p_0^{\mathrm{b}}$ (the base distribution) and wish to sample from the tilted distribution \eqref{eq:reward_tilt}. We use the forward noising kernel $q_{t|0}$ from \eqref{eq:forward_kernel}, whose score is available in closed form:
\begin{equation}
\nabla_{x_t} \log q_{t|0}(x_t \mid x_0) = \frac{\alpha_t x_0 - x_t}{\sigma_t^2}.
\label{eq:grad_q_closed}
\end{equation}

Noising the tilted distribution, the marginal at time $t$ is:
\begin{equation}
    p_t^{\lambda r}(x_t) = \int q_{t|0}(x_t \mid x_0)\, p_0^{\lambda r}(x_0)\, \mathrm{d}x_0.
    \label{eq:marginal_tilted}
\end{equation}

For any prior density $p_0^{\circ}$ (e.g., base $p_0^{\mathrm{b}}$ or tilted $p_0^{\lambda r}$) and the fixed forward noising kernel $q_{t|0}(x_t \mid x_0)$, we write $p_{0|t}^{\circ}(x_0 \mid x_t)$ for the posterior (inference) density of the clean sample $X_0$ given its noised version $X_t = x_t$ at time $t$:
\begin{equation}
    p_{0|t}^{\circ}(x_0 \mid x_t) \propto p_0^{\circ}(x_0)\, q_{t|0}(x_t \mid x_0).
\end{equation}

\begin{prop}[Score of the tilted distribution]
\label{prop:score_tilted}
Assume that $r$ has at most linear growth, i.e., $\forall x\in\R^d:\;|r(x)| \leq C(1 + \|x\|)$ for some $C > 0$, and that $q_{t|0}$ is the Gaussian forward kernel \eqref{eq:forward_kernel}. Then for all $t \in (0,1]$ and $x_t \in \R^d$,
\begin{equation}
    \nabla_{x_t} \log p_t^{\lambda r}(x_t) = \frac{\E_{X_0 \sim p_{0|t}^{\mathrm{b}}(\cdot \mid x_t)}\!\left[\nabla_{x_t} \log q_{t|0}(x_t \mid X_0)\, \exp(\lambda r(X_0))\right]}{\E_{X_0 \sim p_{0|t}^{\mathrm{b}}(\cdot \mid x_t)}\!\left[\exp(\lambda r(X_0))\right]}.
    \label{eq:score_tilted_final}
\end{equation}
\end{prop}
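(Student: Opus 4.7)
The plan is to differentiate the integral definition of $p_t^{\lambda r}$ directly and then reshape the resulting ratio into the posterior-expectation form claimed in \eqref{eq:score_tilted_final}. Substituting the tilt $p_0^{\lambda r}(x_0) = Z_{\lambda r}^{-1} \exp(\lambda r(x_0))\, p_0^{\mathrm{b}}(x_0)$ into \eqref{eq:marginal_tilted} yields
\begin{equation}
p_t^{\lambda r}(x_t) = Z_{\lambda r}^{-1} \int q_{t|0}(x_t \mid x_0)\, \exp(\lambda r(x_0))\, p_0^{\mathrm{b}}(x_0)\, \mathrm{d}x_0.
\end{equation}
The normalising constant $Z_{\lambda r}$ does not depend on $x_t$, so it will drop out of $\nabla_{x_t} \log p_t^{\lambda r}$.

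The core computation proceeds in three short steps. First, I would differentiate under the integral and use the log-derivative identity $\nabla_{x_t} q_{t|0} = q_{t|0}\, \nabla_{x_t} \log q_{t|0}$ to obtain
\begin{equation}
\nabla_{x_t} \log p_t^{\lambda r}(x_t) = \frac{\int \nabla_{x_t} \log q_{t|0}(x_t \mid x_0)\, \exp(\lambda r(x_0))\, q_{t|0}(x_t \mid x_0)\, p_0^{\mathrm{b}}(x_0)\, \mathrm{d}x_0}{\int \exp(\lambda r(x_0))\, q_{t|0}(x_t \mid x_0)\, p_0^{\mathrm{b}}(x_0)\, \mathrm{d}x_0}.
\end{equation}
Second, I would use Bayes' rule for the base prior, $q_{t|0}(x_t \mid x_0)\, p_0^{\mathrm{b}}(x_0) = p_{0|t}^{\mathrm{b}}(x_0 \mid x_t)\, p_t^{\mathrm{b}}(x_t)$, and divide numerator and denominator by $p_t^{\mathrm{b}}(x_t)$ (which cancels) to rewrite both integrals as expectations under the base posterior $p_{0|t}^{\mathrm{b}}(\cdot \mid x_t)$. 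This yields exactly \eqref{eq:score_tilted_final}. Third, for bookkeeping one may replace $\nabla_{x_t} \log q_{t|0}(x_t \mid x_0)$ by the closed form $(\alpha_t x_0 - x_t)/\sigma_t^2$ from \eqref{eq:grad_q_closed}, although this is not strictly needed for the statement.

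The main obstacle is the interchange of gradient and integral in the first step; this is where the at-most-linear-growth hypothesis on $r$ is used. For fixed $t \in (0,1]$ and $x_t$ in a bounded neighbourhood, the Gaussian kernel $q_{t|0}(x_t \mid x_0)$ decays like $\exp(-\|x_0\|^2/(2\sigma_t^2/\alpha_t^2))$ in $x_0$, whereas $\exp(\lambda r(x_0))$ grows only like $\exp(\lambda C \|x_0\|)$; thus the integrand is dominated by an integrable Gaussian, and the same estimate applied to $\nabla_{x_t} q_{t|0}(x_t \mid x_0) = ((\alpha_t x_0 - x_t)/\sigma_t^2)\, q_{t|0}(x_t \mid x_0)$ (which picks up only a polynomial factor in $\|x_0\|$) gives a uniform integrable dominating function on a neighbourhood of $x_t$. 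Dominated convergence then legitimises differentiation under the integral, and the positivity of $p_t^{\mathrm{b}}(x_t)$ for $t>0$ (needed to divide by it) follows from strict positivity of the Gaussian kernel, completing the argument.
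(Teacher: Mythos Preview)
Your proposal is correct and follows essentially the same route as the paper: differentiate under the integral (justified by dominated convergence via the linear-growth bound on $r$ and the Gaussian decay of $q_{t|0}$), then recognise the resulting ratio as a reweighted base-posterior expectation. The only cosmetic difference is that the paper first states Fisher's identity $\nabla_{x_t}\log p_t^{\lambda r}(x_t)=\E_{p_{0|t}^{\lambda r}}[\nabla_{x_t}\log q_{t|0}]$ and then rewrites $p_{0|t}^{\lambda r}$ as an exponential tilt of $p_{0|t}^{\mathrm{b}}$, whereas you go directly from the ratio of integrals to the base-posterior form; the underlying computation is identical.
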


\begin{proof}
Exchanging differentiation and integration in \eqref{eq:marginal_tilted} (justified by dominated convergence under the stated assumptions) yields
\begin{equation}
    \nabla_{x_t} p_t^{\lambda r}(x_t) = \int p_0^{\lambda r}(x_0)\, \nabla_{x_t} q_{t|0}(x_t \mid x_0)\, \mathrm{d}x_0.
\end{equation}
Dividing by $p_t^{\lambda r}(x_t)$ and using $\nabla_{x_t} p_t^{\lambda r}(x_t) = p_t^{\lambda r}(x_t) \nabla_{x_t} \log p_t^{\lambda r}(x_t)$, we obtain Fisher's identity:
\begin{equation}
    \nabla_{x_t} \log p_t^{\lambda r}(x_t) = \E_{X_0 \sim p_{0|t}^{\lambda r}(\cdot \mid x_t)}\!\left[\nabla_{x_t} \log q_{t|0}(x_t \mid X_0)\right].
    \label{eq:fisher_identity}
\end{equation}
The key observation is that the tilted posterior can be written as a reweighting of the base posterior:
\begin{align}
  p_{0|t}^{\lambda r}(x_0 \mid x_t) = \frac{p_0^{\lambda r}(x_0)\, q_{t|0}(x_t \mid x_0)}{p_t^{\lambda r}(x_t)} &= \frac{Z_{\lambda r}^{-1} \exp(\lambda r(x_0))\, p_0^{\mathrm{b}}(x_0)\, q_{t|0}(x_t \mid x_0)}{\int p_0^{\lambda r}(x_0')\, q_{t|0}(x_t \mid x_0')\, \mathrm{d}x_0'} \nonumber\\
    &= \frac{\exp(\lambda r(x_0))\, p_{0|t}^{\mathrm{b}}(x_0 \mid x_t)}{\E_{X_0' \sim p_{0|t}^{\mathrm{b}}(\cdot \mid x_t)}\!\left[\exp(\lambda r(X_0'))\right]},
    \label{eq:posterior_tilted}
\end{align}
where in the last equality, the marginal $p_t^{\mathrm{b}}(x_t)$ cancels out. The tilted posterior $p_{0|t}^{\lambda r}(\cdot \mid x_t)$ is thus an exponential tilt of the base posterior $p_{0|t}^{\mathrm{b}}(\cdot \mid x_t)$. Substituting \eqref{eq:posterior_tilted} into \eqref{eq:fisher_identity} gives \eqref{eq:score_tilted_final}.
\end{proof}

\paragraph{Iterative tilts.}
To efficiently approximate the score of the tilted distribution, we decompose the full tilt into a sequence of smaller tilts. Fix $N \in \N^*$ (typically large) and define, for $k = 0, \ldots, N$,
\begin{equation}
    p_0^{k}(x) := Z_{k}^{-1} \exp\!\left(\tfrac{k}{N} \lambda r(x)\right) p_0^{\mathrm{b}}(x),
    \label{eq:iterative_tilted_density}
\end{equation}
where $Z_k$ is the normalising constant. We have $p_0^{0} = p_0^{\mathrm{b}}$ and $p_0^{N} = p_0^{\lambda r}$. The key observation is that consecutive distributions are related by a small tilt: $p_0^{k}(x) \propto \exp(\lambda r(x)/N)\, p_0^{k-1}(x)$.

\begin{corollary}[First-order score approximation]
\label{cor:score_approx}
Under the assumptions of Proposition~\ref{prop:score_tilted}, for all $x_t \in \R^d$ and $t \in (0,1]$, define the posterior $\pi_{k-1}(\cdot) := p_{0|t}^{k-1}(\cdot \mid x_t)$ and the covariance
\begin{equation}
    C_{k-1}(x_t, t) := \mathrm{Cov}_{X_0 \sim \pi_{k-1}}\!\left(\nabla_{x_t} \log q_{t|0}(x_t \mid X_0),\, r(X_0)\right).
\end{equation}
Then, for large $N$,
\begin{equation}
    \nabla_{x_t} \log p_t^{k}(x_t) = \nabla_{x_t} \log p_t^{k-1}(x_t) + \frac{\lambda}{N}\, C_{k-1}(x_t, t) + O\!\left(\frac{1}{N^2}\right).
    \label{eq:score_iterative_tilted_approx_final}
\end{equation}
\end{corollary}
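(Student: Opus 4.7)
The idea is to view $p_0^{k}$ as a small tilt of $p_0^{k-1}$ and to apply Proposition~\ref{prop:score_tilted} with $p_0^{\mathrm{b}}$ replaced by $p_0^{k-1}$ and $\lambda$ replaced by $\lambda/N$. From the definition \eqref{eq:iterative_tilted_density}, one immediately checks that
\begin{equation}
    p_0^{k}(x) \propto \exp\!\left(\tfrac{\lambda}{N}\, r(x)\right)\, p_0^{k-1}(x),
\end{equation}
so Proposition~\ref{prop:score_tilted} yields
\begin{equation}
    \nabla_{x_t} \log p_t^{k}(x_t) = \frac{\E_{X_0 \sim \pi_{k-1}}\!\left[\nabla_{x_t} \log q_{t|0}(x_t \mid X_0)\, \exp(\tfrac{\lambda}{N} r(X_0))\right]}{\E_{X_0 \sim \pi_{k-1}}\!\left[\exp(\tfrac{\lambda}{N} r(X_0))\right]}.
\end{equation}

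Next I would Taylor-expand $\exp(\lambda r(X_0)/N) = 1 + \frac{\lambda}{N} r(X_0) + O(1/N^2)$ inside both expectations. The numerator becomes $\E_{\pi_{k-1}}[\nabla_{x_t} \log q_{t|0}(x_t \mid X_0)] + \frac{\lambda}{N} \E_{\pi_{k-1}}[\nabla_{x_t} \log q_{t|0}(x_t \mid X_0)\, r(X_0)] + O(1/N^2)$, and the denominator becomes $1 + \frac{\lambda}{N} \E_{\pi_{k-1}}[r(X_0)] + O(1/N^2)$. Applying Fisher's identity \eqref{eq:fisher_identity} at level $k-1$ identifies the leading term of the numerator with $\nabla_{x_t} \log p_t^{k-1}(x_t)$.

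Using the elementary expansion $(a + b/N)(1 + c/N)^{-1} = a + (b - ac)/N + O(1/N^2)$ gives
\begin{equation}
    \nabla_{x_t} \log p_t^{k}(x_t) = \nabla_{x_t} \log p_t^{k-1}(x_t) + \frac{\lambda}{N}\Bigl(\E_{\pi_{k-1}}[\nabla_{x_t} \log q_{t|0}(x_t \mid X_0)\, r(X_0)] - \nabla_{x_t} \log p_t^{k-1}(x_t)\, \E_{\pi_{k-1}}[r(X_0)]\Bigr) + O(1/N^2).
\end{equation}
A second application of Fisher's identity rewrites the bracketed term as the covariance $C_{k-1}(x_t, t)$, yielding \eqref{eq:score_iterative_tilted_approx_final}.

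\textbf{Main obstacle.} The routine step of Taylor expansion is harmless pointwise, but the $O(1/N^2)$ remainder must be controlled uniformly in $k$ and locally uniformly in $(x_t, t)$. Concretely, the Lagrange remainder of $\exp(\lambda r(X_0)/N)$ involves $r(X_0)^2$, and via \eqref{eq:grad_q_closed} the score $\nabla_{x_t} \log q_{t|0}(x_t \mid X_0)$ is affine in $X_0$; one therefore needs finiteness and uniform-in-$k$ bounds on $\E_{\pi_{k-1}}[\|X_0\|^p\, |r(X_0)|^q]$ for small $p, q$. Under the linear growth assumption on $r$ and the Gaussian form of $q_{t|0}$, the base posterior $p_{0|t}^{\mathrm{b}}(\cdot \mid x_t)$ is sub-Gaussian in $x_0$ (with parameters depending on $t > 0$), and the reweighting $\pi_{k-1} \propto \exp((k-1)\lambda r / N)\, p_{0|t}^{\mathrm{b}}$ preserves sub-Gaussianity because $(k-1)\lambda/N \leq \lambda$. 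Hence all required moments are bounded uniformly in $k$, and the remainder is genuinely $O(1/N^2)$.
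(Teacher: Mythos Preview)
Your proposal is correct and follows essentially the same route as the paper: apply Proposition~\ref{prop:score_tilted} with base $p_0^{k-1}$ and tilt $\lambda/N$, Taylor-expand $\exp(\lambda r/N)$ in numerator and denominator, and identify the first-order term as the covariance via Fisher's identity. Your additional discussion of the remainder control (moment bounds uniform in $k$ under sub-Gaussianity of the posterior) is a welcome rigorisation that the paper's own proof leaves implicit.
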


\begin{proof}
Applying Proposition~\ref{prop:score_tilted} with base $p_0^{k-1}$ and tilt strength $\lambda/N$:
\begin{equation}
    \nabla_{x_t} \log p_t^{k}(x_t) = \frac{\E_{\pi_{k-1}}\!\left[g(X_0)\, \exp(\tfrac{\lambda}{N} r(X_0))\right]}{\E_{\pi_{k-1}}\!\left[\exp(\tfrac{\lambda}{N} r(X_0))\right]},
    \label{eq:score_iterative_tilted}
\end{equation}
where $g(X_0) := \nabla_{x_t} \log q_{t|0}(x_t \mid X_0)$. Taylor-expanding $\exp(\lambda r / N) = 1 + \lambda r / N + O(N^{-2})$ in both numerator and denominator:
\begin{align*}
    \text{Numerator} &= \E_{\pi_{k-1}}[g] + \tfrac{\lambda}{N}\, \E_{\pi_{k-1}}[g\, r] + O(N^{-2}), \\
    \text{Denominator} &= 1 + \tfrac{\lambda}{N}\, \E_{\pi_{k-1}}[r] + O(N^{-2}).
\end{align*}
Dividing and keeping first-order terms yields $\E_{\pi_{k-1}}[g] + (\lambda / N)\, C_{k-1}(x_t, t) + O(N^{-2})$. By Fisher's identity, $\E_{\pi_{k-1}}[g] = \nabla_{x_t} \log p_t^{k-1}(x_t)$.
\end{proof}

\subsection{Neural approximation and algorithm}

\Cref{cor:score_approx} motivates a recursive algorithm for estimating the sequence of score functions
$\{\nabla_{x_t} \log p_t^{k}\}_{k=1}^{N}$ starting from $k=0$, for
which $p_t^{0} = p^{\mathrm{b}}_t$.  Suppose that at iteration
$k$, we have access to an approximation for
$\nabla_{x_t} \log p_t^{k}$, denoted by $s_{\theta^{k}}^{k}$. The resulting generative models associated to $s_{\theta^{k}}^{k}$ is called the teacher at step $k$.
\Cref{cor:score_approx} provides a natural procedure to obtain an approximation for $\nabla_{x_t} \log p_t^{k+1}$: we  train a new network $s_{\theta}^{k+1}$ with weights $\theta$, so that it matches the target score from \eqref{eq:score_iterative_tilted_approx_final}. The updated parameter $\theta^{k+1}$ is thus obtained as an approximate minimizer of the loss
\begin{equation}
    \mathrm{L}_{\mathrm{IT}}(\theta; k+1) = \int_0^1 \int_{\mathcal{X}} w(t) \left\| s_\theta(x_t, t) - \left(s_{\theta^{k}}^{k}(x_t, t) + \frac{\lambda}{N}\, C_{k}(x_t, t)\right)\right\|^2 p_t^{k}(x_t)\, \mathrm{d}x_t\, \mathrm{d}t \eqsp.
    \label{eq:iterative_tilting_loss}
  \end{equation}

  In practice, we make three simplifications. First, we use a Monte Carlo estimate of the loss using a
 teacher-generated dataset $\{X_0^i\}_{i+1}^N$ at the beginning of each step $k$. Second, for $X_t^i \sim q_{t|0}(\cdot | X_0^i)$,
  the covariance $C_{k}(X_t^i, t)$ is approximated by a single-sample estimate $g^i = r(X_0^i)( \nabla_{x_t} \log q_{t|0}(X_t^i \mid X_0^i)- s_{\theta^{k}}(X_t^i,t))$, where $\nabla_{x_t} \log q_{t|0}$ is computed via \eqref{eq:grad_q_closed}. Finally, we multiply by $\sigma_t^2$ to stabilize training across noise levels. This leads to the following empirical loss for a single sample time $t$ and pair $(X_0^i,X_1^i)$: setting $\delta =\lambda/N$,
\begin{equation}
    \mathrm{L}_{\mathrm{prac}}(\theta; k+1) = \sum_{i=1}^N \sigma_t^2 \left\| s_\theta^{k+1}(X_t^i, t) - \sigma_t^2 \left(s_{\theta^{k}}^k(X_t^i, t) + \delta\, r(X_0^i)\left(g^i - s_{\theta^{k}}^k(X_t^i, t)\right)\right)\right\|^2 \eqsp,
    \label{eq:it_practical_loss}
\end{equation}

Finally, since we  expect that $\nabla_{x_t} \log p_t^{k}$ to be close to $\nabla_{x_t} \log p_t^{k+1}$, the score network at step $k+1$, $s_{\theta}^{k+1}$ is initialized at $s_{\theta^{k}}^{k}$.
The iterative tilting procedure is summarised in Algorithm~\ref{alg:iterative_tilting}.
\begin{algorithm}
    \caption{Iterative Tilting}
    \label{alg:iterative_tilting}
    \begin{algorithmic}[1]
        \Require Pretrained model $s_{\theta^0}$, number of tilts $N$, tilt strength $\lambda$, samples per tilt $S$, reverse steps $n$, batch size $B$, epochs $E$, schedule $(\alpha_t, \sigma_t)$
        \State Set step size $\delta \leftarrow \lambda/N$
        \For{$k = 1$ to $N$}
            \State Generate dataset $\mathrm{D}_k$ of $S$ samples: $x_0 \leftarrow \mathrm{DDIM}(s_{\theta^{k}}, n)$
            \State Initialise $\theta \leftarrow \theta^{k}$ and freeze teacher $s_{\theta^{k}}^k$
            \For{epoch $= 1$ to $E$}
            \State Sample mini-batch $\{X_0^{(i)}\}_{i=1}^B \subset \mathrm{D}_k$, $\{t_i\}_{i=1}^B \simiid \mathrm{Unif}[0,1]$, $\{Z_i\}_{i=1}^B \simiid \mathrm{N}(0, \mathrm{I}_d)$
                        \For{ $i= 1$ to $B$}
                \State Noise: $x_{t_i}^{(i)} = \alpha_{t_i} X_0^{(i)} + \sigma_{t_i} Z_i$
                \State Compute: $g_i = (\alpha_{t_i} X_0^{(i)} - X_{t_i}^{(i)})/\sigma_{t_i}^2$
                \State Compute: $s_i^{\mathrm{old}} = s_{\theta^{k}}^k(X_{t_i}^{(i)}, t_i)$
                \State Target: $\mathrm{target}_i = s_i^{\mathrm{old}} + \delta\, r(X_0^{(i)})(g_i - s_i^{\mathrm{old}})$
                \EndFor
                \State Update $\theta$ by one step of gradient descent on
                \[
                     \frac{1}{B} \sum_{i=1}^B \left\| \sigma_{t_i}^2 s_\theta(X_{t_i}^{(i)}, t_i) - \sigma_{t_i}^2 \mathrm{target}_i \right\|^2
                \]
            \EndFor
            \State Set $\theta^k \leftarrow \theta$
        \EndFor
        \State \Return $s_{\theta^N}$
    \end{algorithmic}
\end{algorithm}

\section{Experiments}
\label{sec:experiments}

In this section, we validate the iterative tilting method on a two-dimensional Gaussian mixture with a reward function. This setup has a key advantage: tilting a Gaussian (or mixture of Gaussians) with a quadratic reward (including linear as a special case) yields another Gaussian (or mixture), for which we can compute the exact tilted density and score analytically.

\subsection{Gaussian data with quadratic rewards}
\label{subsec:gaussian_potential}

\paragraph{One Gaussian.}
Suppose the base density is Gaussian:
\begin{equation}
    p^{\mathrm{b}}_0(x) = \frac{1}{(2\pi)^{d/2}|\Sigma|^{1/2}} \exp\left\{ -\tfrac{1}{2}(x-\mu)^{T}\Sigma^{-1}(x-\mu)\right\} = \mathrm{N}(x;\mu,\Sigma).
\end{equation}
and the reward function $r$ is quadratic, i.e.:
\begin{equation}
    r(x) = \tfrac{1}{2} x^{T} A x + b^{T}x + c.
\end{equation}
Then the product $\exp(r(x))p^{\mathrm{b}}_0(x)$ writes:
\begin{equation}
    \frac{1}{(2\pi)^{d/2}|\Sigma|^{1/2}} \exp\left\{ - \tfrac{1}{2}(x-\mu)^{T}\Sigma^{-1}(x-\mu) + \left(\tfrac{1}{2}x^{T}Ax + b^{T}x + c\right)\right\}.
\end{equation}
Focusing on the term in the exponent:
\begin{align}
    &-\tfrac{1}{2}(x-\mu)^{T}\Sigma^{-1}(x-\mu) + \left(\tfrac{1}{2}x^{T}Ax + b^{T}x + c\right) \nonumber \\
    &= -\tfrac{1}{2}\left[x^{T}(\Sigma^{-1}-A)x - 2(\Sigma^{-1}\mu + b)^{T}x\right] + c - \tfrac{1}{2}\mu^{T}\Sigma^{-1}\mu.
\end{align}
With \(\Sigma' = (\Sigma^{-1}-A)^{-1}\) and \(\mu' = \Sigma'(\Sigma^{-1}\mu + b)\) (we assume \(\Sigma^{-1}-A\) is positive definite so that the tilt is normalizable), the exponential can be rewritten as
\begin{equation}
    \exp(r(x))p^{\mathrm{b}}_0(x) = \left(\frac{|\Sigma'|}{|\Sigma|}\right)^{\!1/2}\exp\left\{\tfrac{1}{2}\big[\mu'^{T}\Sigma'^{-1}\mu' - \mu^{T}\Sigma^{-1}\mu\big] + c\right\}\, \mathrm{N}(x;\mu',\Sigma').
\end{equation}
Consequently,
\begin{equation}
    \int \exp(r(x))p^{\mathrm{b}}_0(x)\,dx = \left(\frac{|\Sigma'|}{|\Sigma|}\right)^{\!1/2}\exp\left\{\tfrac{1}{2}\big[\mu'^{T}\Sigma'^{-1}\mu' - \mu^{T}\Sigma^{-1}\mu\big] + c\right\}.
\end{equation}

\paragraph{Mixture of Gaussians.}
Suppose that we now have a mixture of Gaussians with $(\omega_{1},\ldots,\omega_{K}) \in \Delta_{K-1}$ where
\begin{equation}
    \Delta_{K-1} = \{(\omega_{1},\ldots,\omega_{K}) \in \mathbb{R}^{K}_{+}: \sum_{i\in[K]} \omega_{i}=1\}
\end{equation}
is the simplex and $(\mu_{1},\Sigma_{1}),\ldots,(\mu_{K},\Sigma_{K})$ corresponding means and covariances.
\begin{equation}
    p^{\mathrm{b}}_0(x) = \sum_{i\in[K]} \omega_{i}\,\mathrm{N}(x;\mu_{i},\Sigma_{i}).
\end{equation}
Then, we have:
\begin{align}
    \exp(r(x))p^{\mathrm{b}}_0(x) &= \sum_{i\in[K]} \omega_{i} \, \exp(r(x))\,\mathrm{N}(x;\mu_{i},\Sigma_{i}) \nonumber \\
    &= \sum_{i\in[K]} \omega'_{i}\,\mathrm{N}(x;\mu'_{i},\Sigma'_{i}),
\end{align}
where the parameters of the tilted Gaussians are

\begin{equation}
\left\{
\begin{aligned}
\Sigma'_{i} &= (\Sigma_{i}^{-1}-A)^{-1},\\
\mu'_{i} &= \Sigma'_{i}(\Sigma_{i}^{-1}\mu_{i} + b),\\
\omega'_{i} &= \omega_{i}\left(\frac{|\Sigma'_{i}|}{|\Sigma_{i}|}\right)^{\!1/2}\exp\left\{\tfrac{1}{2}\big[\mu'_{i}{}^{T}\Sigma'^{-1}_{i}\mu'_{i} - \mu_{i}^{T}\Sigma_{i}^{-1}\mu_{i}\big] + c\right\}.
\end{aligned}
\right.
\end{equation}

We consider the following mixture of two Gaussians in dimension $d=2$ as base distribution:

\begin{equation}
    \left\{
    \begin{array}{l}
        K=2 \\
        \mu_1 = (-2, 0);\, \mu_2 = (2, 0)\\
        \Sigma_1 = \Sigma_2 = \sigma^2 \I;\, \sigma^2 = 1 / 2   \\
        \omega_1 = \omega_2 = 1 / 2
    \end{array}
    \right.
\end{equation}

and we consider the linear reward $r(x) = b^Tx$ with $b = (0, 4)$ (corresponding to $A = 0$, $c=0$ in the general quadratic form).
In this case, the resulting distribution is a mixture of two Gaussians with parameters given by Section~\ref{subsec:gaussian_potential}:
\begin{equation}
    \left\{
    \begin{array}{l}
        K'=2 \\
        \mu'_1 = (-2, 2);\, \mu'_2 = (2, 2)\\
        \Sigma'_1 = \Sigma'_2 = \sigma^2\I;\,\sigma^2 = 1 /2   \\

        \omega'_1 = \omega'_2 = 1 / 2.
    \end{array}
    \right.
\end{equation}

\subsection{Implementation and results}
\paragraph{Experimental setup.}
We first train a diffusion model on the base distribution using the denoising score matching objective \eqref{eq:x1_loss} with the cosine noise schedule of \citep{nichol2021a}.
The architecture of the base model is a MLP model with depth $5$ of width $256$.
We embed the time $t$ with Fourier embeding \citep{tancik2020, song2021} of dimension $128$.
We train the model on a dataset of $30,000$ samples from the base distribution for $200$ epochs with early stopping (patience $50$).
The base model achieves a test negative log-likelihood of $3.278$ nats (computed on $5,000$ data points) and a mean squared error on the mean of $4.3\times 10^{-3}$.

We then fine-tune the model using the iterative tilting method (Algorithm~\ref{alg:iterative_tilting}) with $N\in\left\{20, 50, 100, 200\right\}$ tilts, $S=1,000$ samples per tilt, $J=200$ reverse steps with stochastic DDIM ($\eta=1.0$), batch size $B=64$ and $E=100$ epochs per tilt.

\paragraph{Results.}
We provide in Figure~\ref{fig:2d_gaussian_mixture_samples} samples obtained with the fine-tuned model at different stages of the tilting process for $N\in\{20, 50, 100, 200\}$.

\begin{figure}
    \centering
    \includegraphics[width=0.48\textwidth]{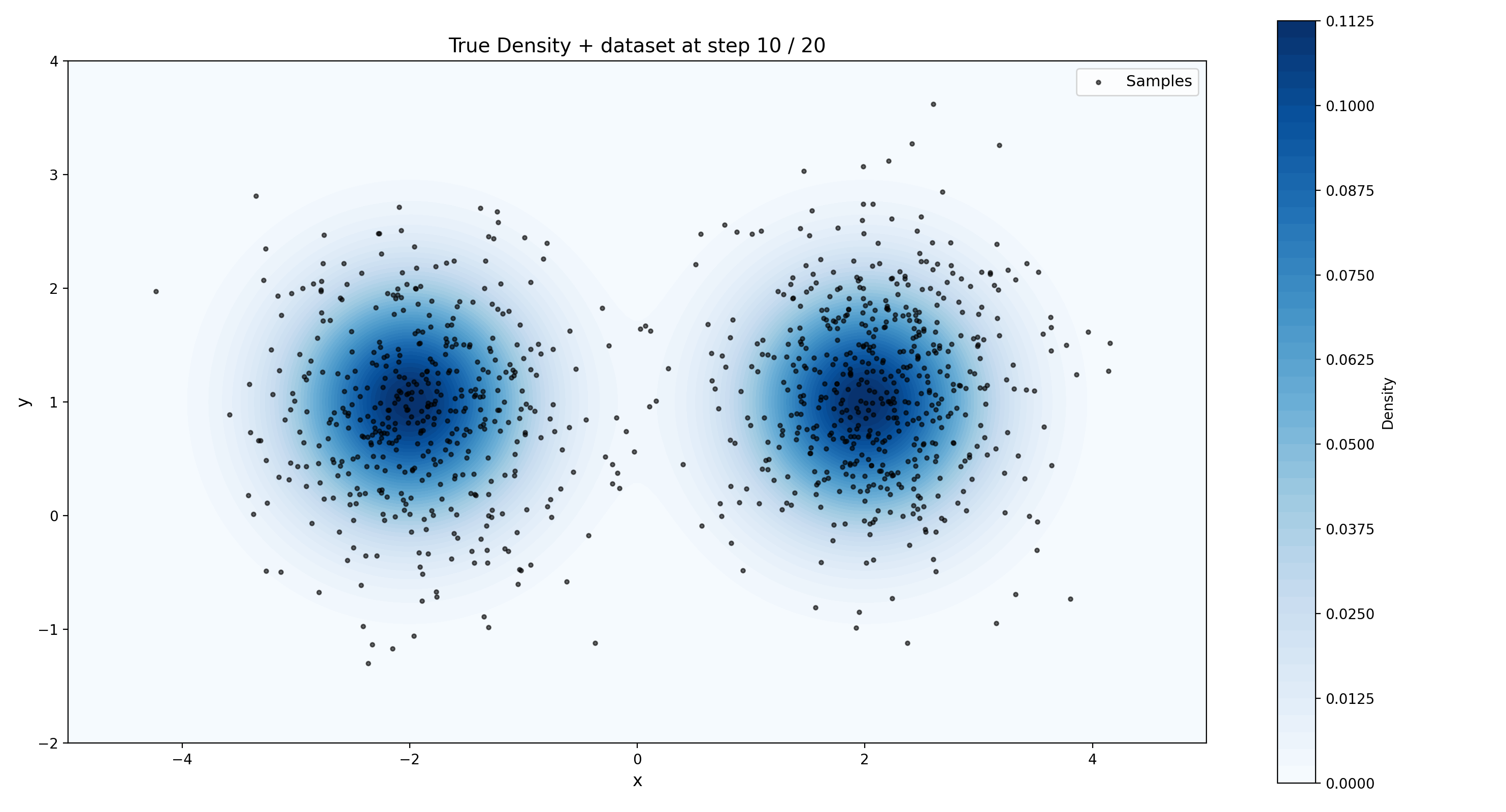}\hfill
    \includegraphics[width=0.48\textwidth]{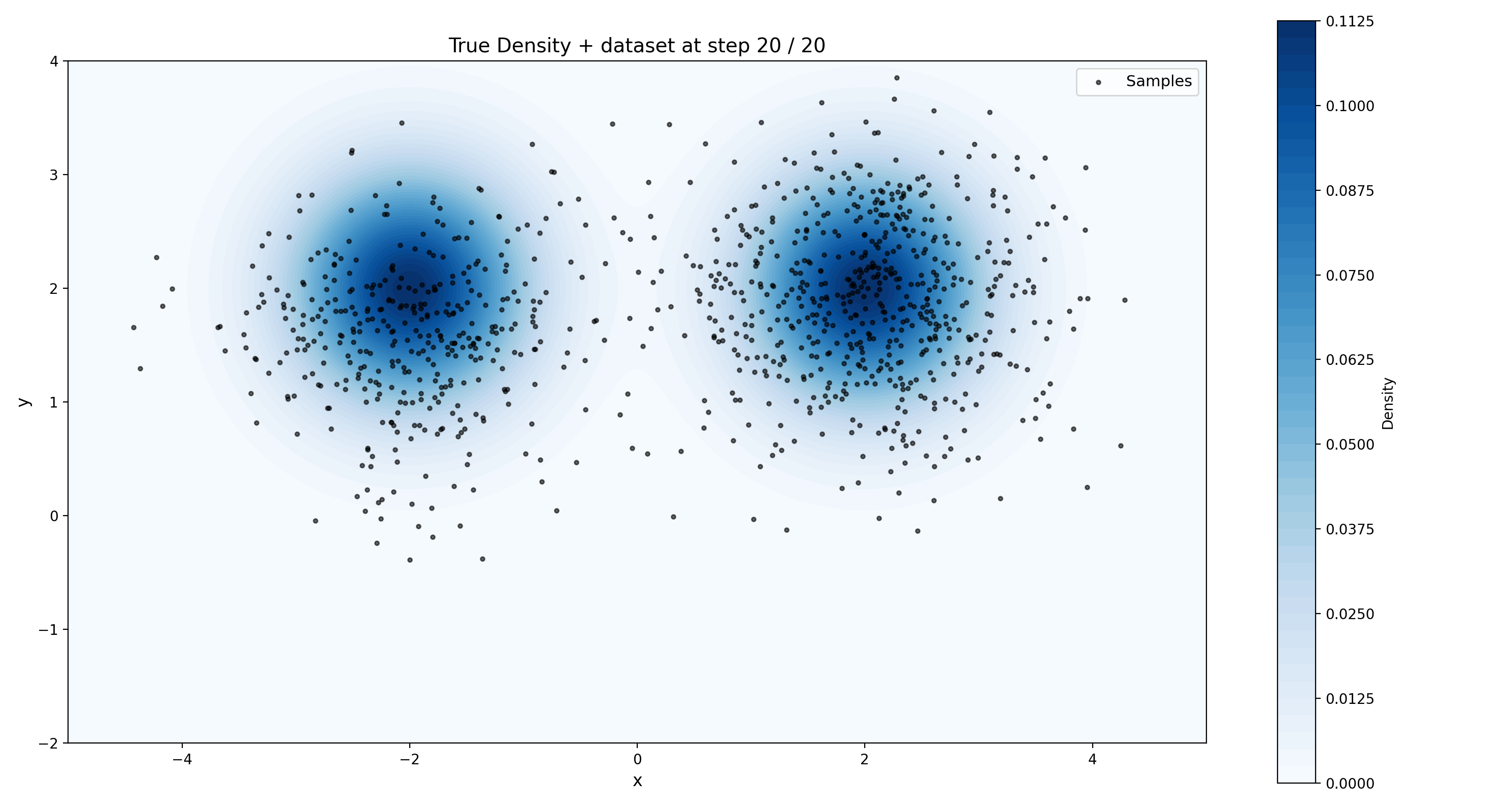}\\[0.6em]
    \includegraphics[width=0.48\textwidth]{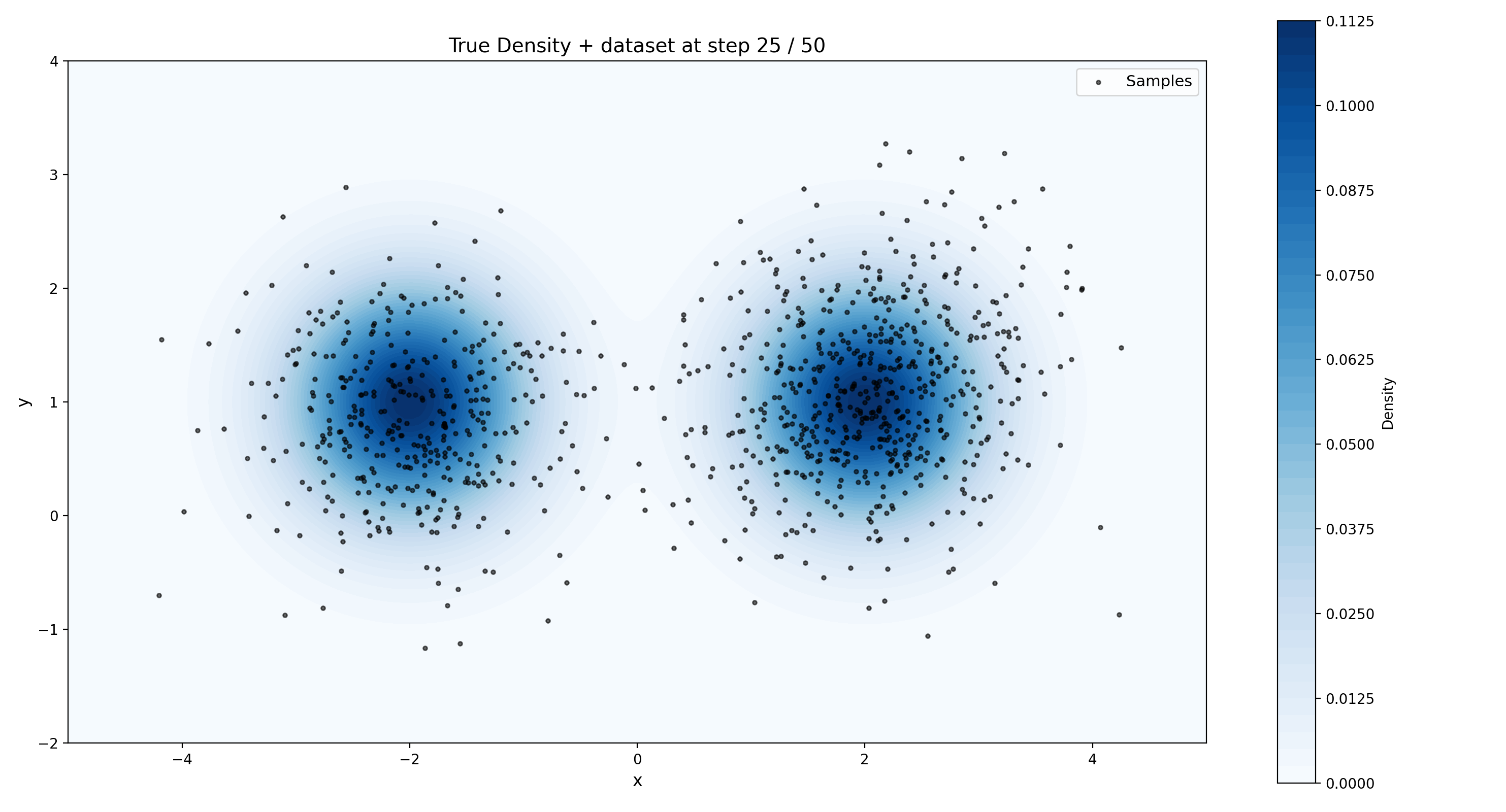}\hfill
    \includegraphics[width=0.48\textwidth]{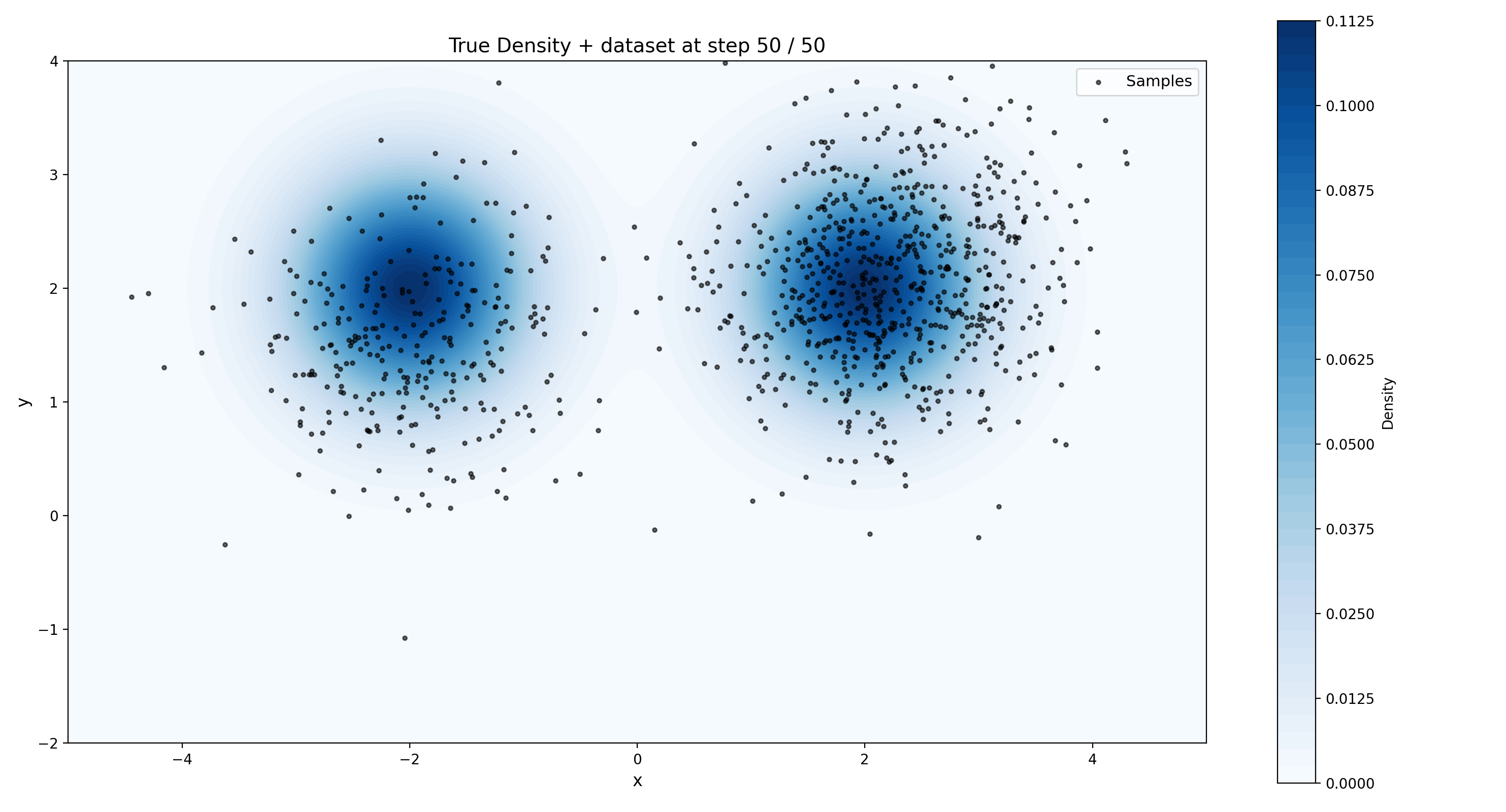}\\[0.6em]
    \includegraphics[width=0.48\textwidth]{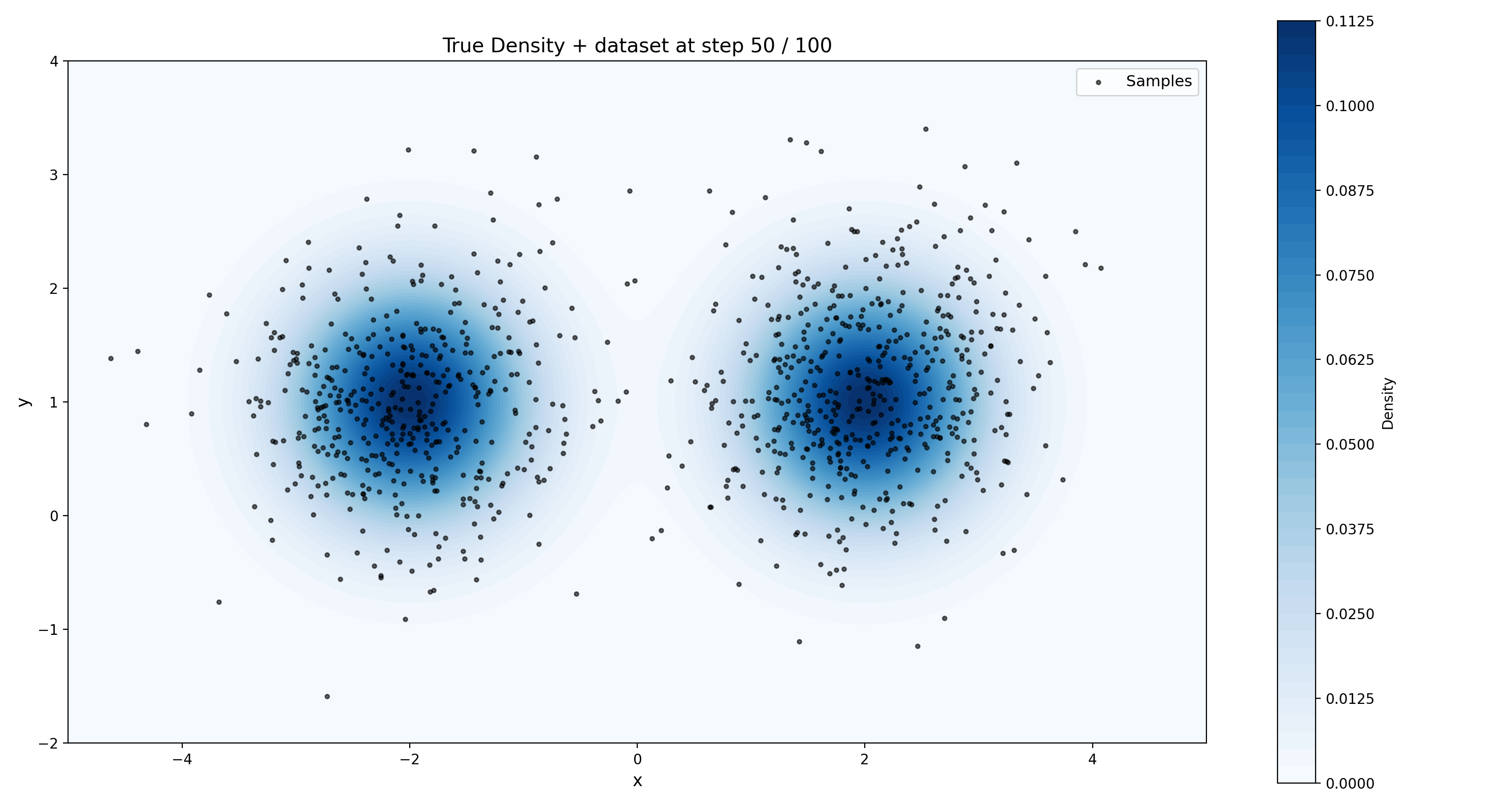}\hfill
    \includegraphics[width=0.48\textwidth]{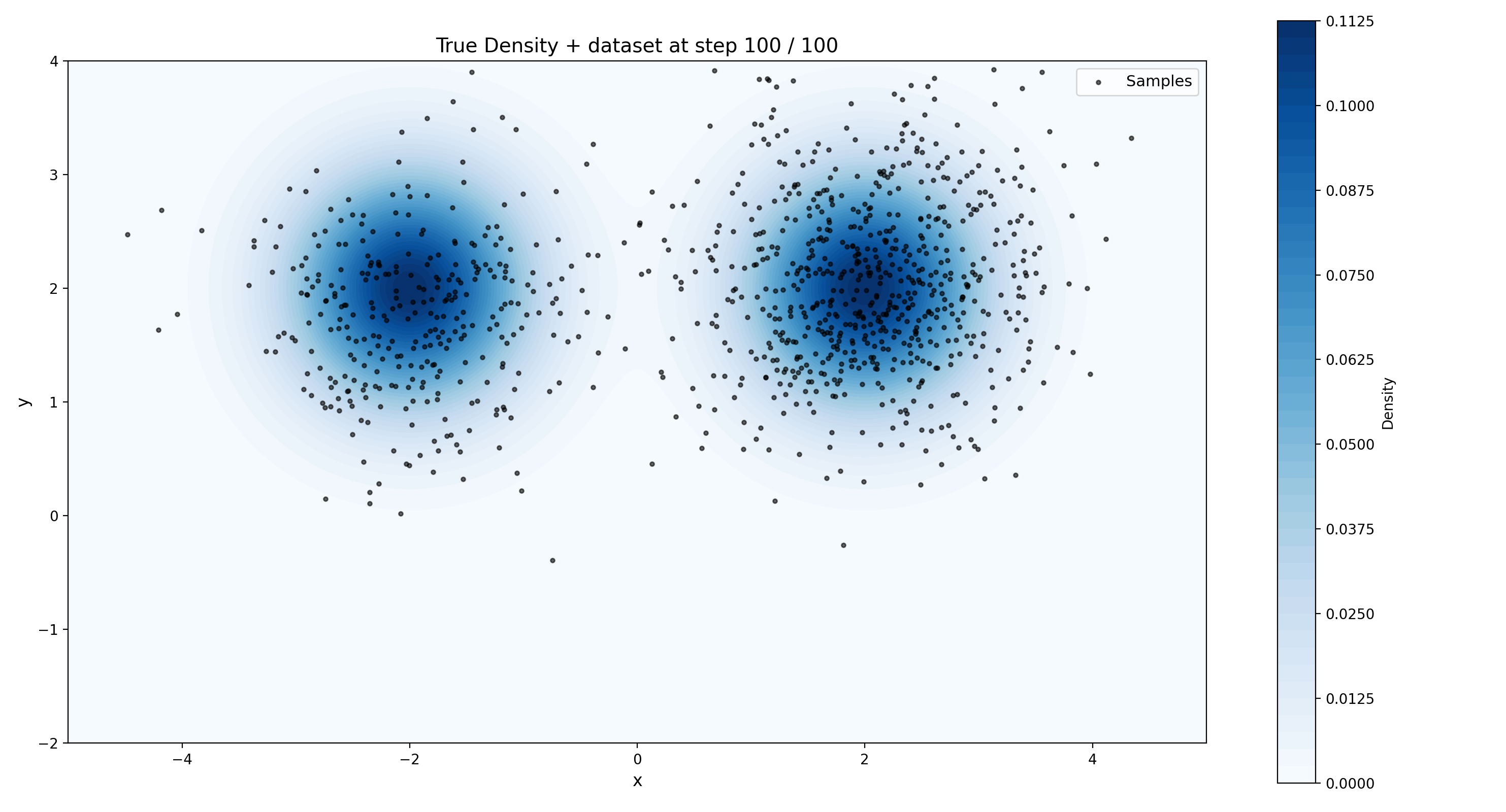}\\[0.6em]
    \includegraphics[width=0.48\textwidth]{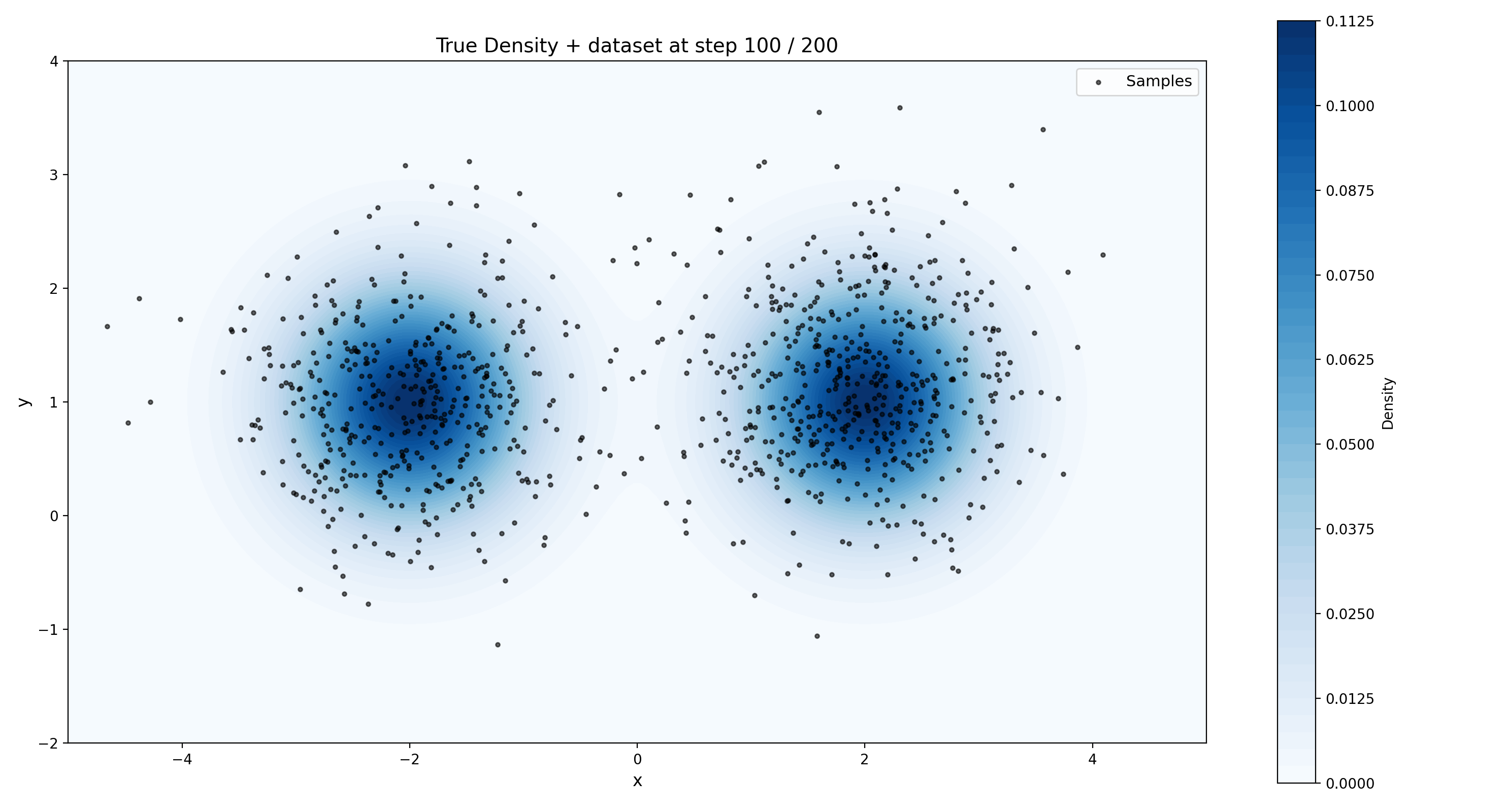}\hfill
    \includegraphics[width=0.48\textwidth]{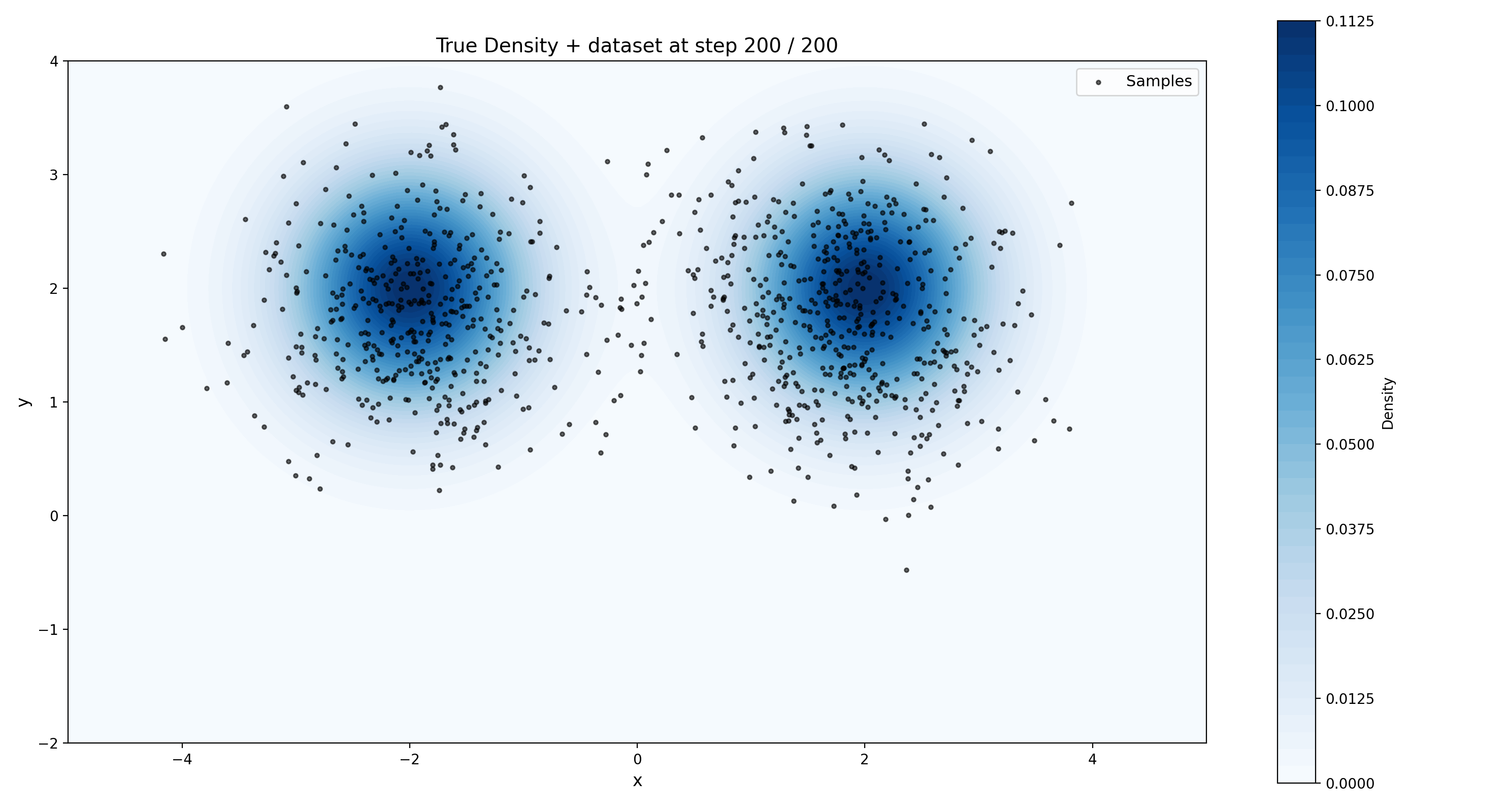}

    \caption{Iterative tilting on the 2-D GMM. Each row shows fixed-$N$ settings (top to bottom: $N=20,50,100,200$) at iterations $N/2$ (left) and $N$ (right). The right plot in each row corresponds to the target distribution.}
    \label{fig:2d_gaussian_mixture_samples}
\end{figure}

At each iteration of the tilting procedure, we know the closed form of the score of $p^{k}_0$ thanks to Section~\ref{subsec:gaussian_potential}, and we can thus compute the mean squared error between the learned score and the true score of the tilted distribution.
Thus, we monitor the RMSE (Root Mean Squared Error) between the learned score and the true score at each iteration $k=1,\ldots,N$:
\begin{equation}
    \mathrm{RMSE}_k = \E_{t\sim\mathrm{Unif}[0,1], X_t\sim p^{k}_t}\left[\|s_{\theta^k}(X_t,t) - \nabla_x \log p^{k}_t(X_t)\|^2\right]^{1/2}.
\end{equation}
We estimate this RMSE using $5,000$ samples from $p^{k}_t$ obtained with $200$ stochastic DDIM reverse steps.
We provide in Figure~\ref{fig:2d_gaussian_mixture_mse} the evolution of this RMSE during the tilting procedure for different values of $N$.

\begin{figure}
    \centering
    \includegraphics[width=0.48\textwidth]{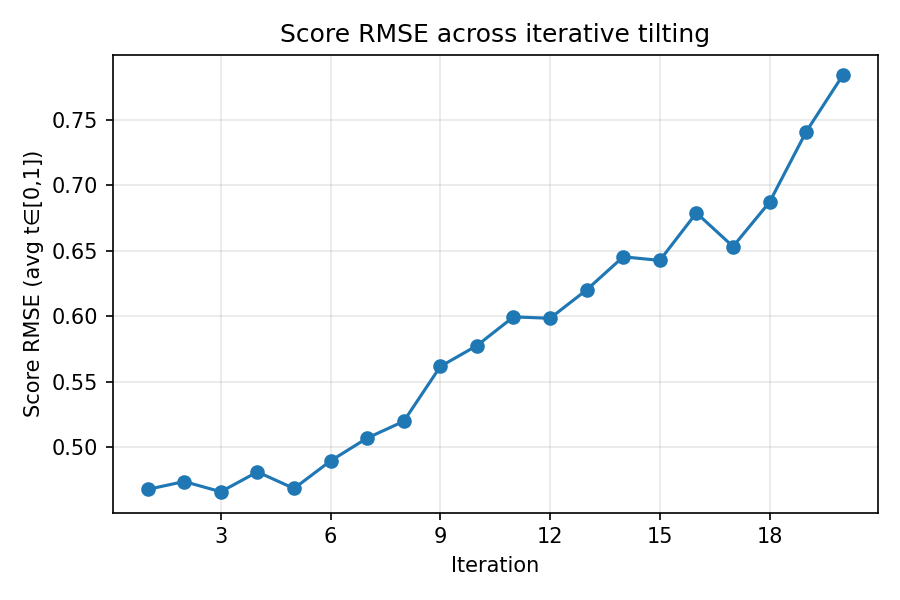}\hfill
    \includegraphics[width=0.48\textwidth]{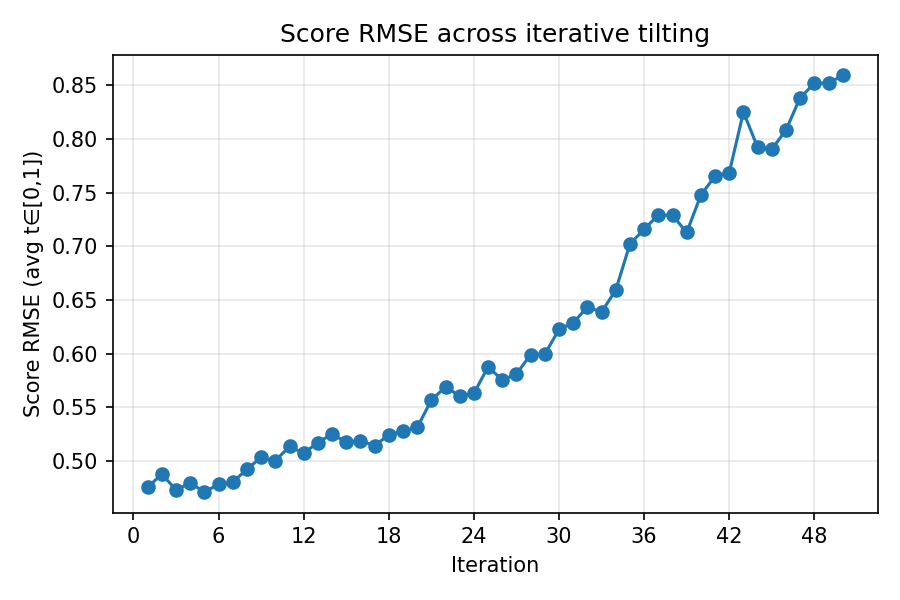}\\[0.6em]
    \includegraphics[width=0.48\textwidth]{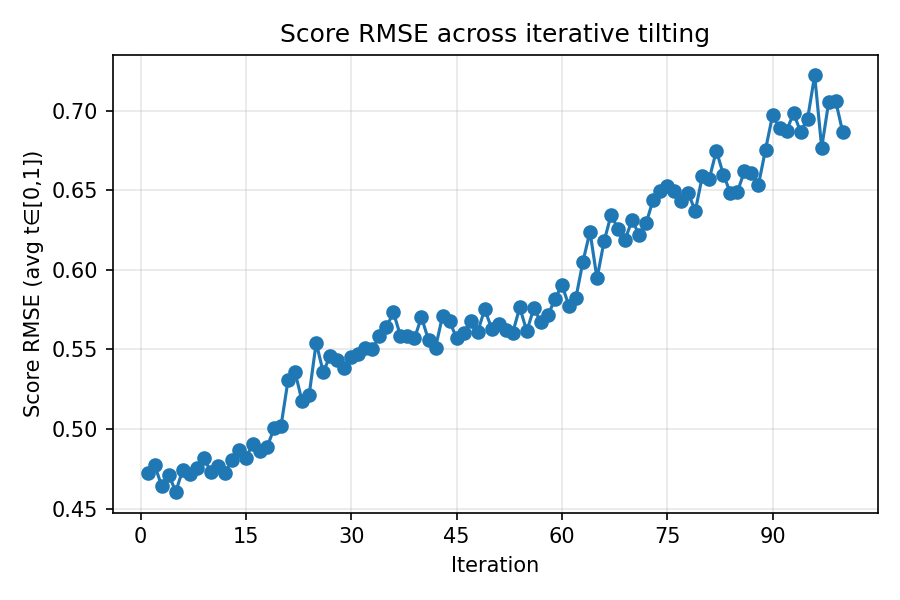}\hfill
    \includegraphics[width=0.48\textwidth]{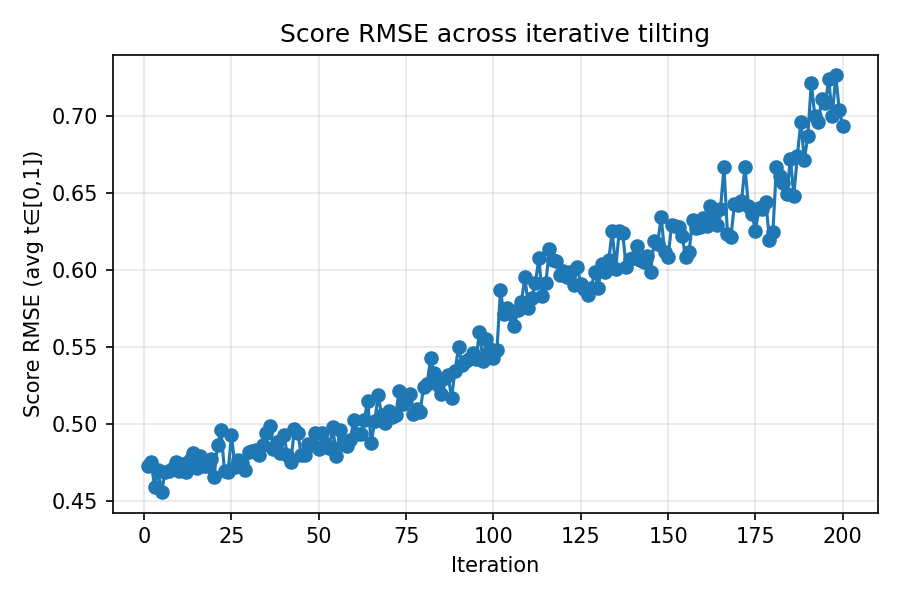}

    \caption{Evolution of the score error during iterative tilting for different $N$. We plot RMSE ($\sqrt{\mathrm{MSE}}$) versus tilt iteration for $N\in\{20,50,100,200\}$.}
    \label{fig:2d_gaussian_mixture_mse}
\end{figure}

 We report running times—total sampling time (s) and total training time (s)—and evaluation metrics (negative log-likelihood and mean squared mean error) in Table~\ref{tab:2d_gaussian_mixture_runtimes}.
\begin{table}[h]
    \centering
    \begin{tabular}{c|cc|cc}
        \toprule
        & \multicolumn{2}{c|}{Runtime (s)} & \multicolumn{2}{c}{Metrics} \\
        \cmidrule(lr){2-3}\cmidrule(lr){4-5}
        Number of tilts $N$ & Sampling (s) & Training (s) & Neg. log-lik. & MSE (mean) \\
        \midrule
        20  & 28.77 & 34.71 & 6.48 & 1.67 \\
        50  & 71.83 & 85.18 & 6.94 & 2.20 \\
        100 & 149.14 & 182.54 & 7.23 & 2.16 \\
        200 & 289.61 & 355.76 & 6.34 & 1.59 \\
        \bottomrule
    \end{tabular}
    \caption{Running times and metrics for iterative tilting on the 2-D GMM. Runtime columns report total sampling and training time in seconds. Metrics report negative log-likelihood and mean squared error on the mean.}
    \label{tab:2d_gaussian_mixture_runtimes}
\end{table}

These controlled experiments validate the mechanics of iterative tilting in a setting where the exact score is observable.
They also quantify the computational footprint of increasing the number of tilts, providing practical guidance for further studies on higher-dimensional datasets and richer reward models.

\section{Conclusion}
\label{sec:conclusion}

This paper introduced iterative tilting, a gradient-free method for fine-tuning diffusion models toward reward-tilted distributions. The core idea is to decompose a single large tilt $\exp(\lambda r)$ into $N$ sequential tilts $\exp(\lambda r / N)$, each producing a tractable score update. At iteration $k$, the score is approximated by
\begin{equation*}
    \nabla_x \log p^{k}_t(x) \approx \nabla_x \log p^{k-1}_t(x) + \frac{\lambda}{N}\, \mathrm{Cov}_{X_0\sim p^{k-1}_{0|t}(\cdot\mid x)}\!\left(\nabla_x \log q_{t|0}(x\mid X_0),\, r(X_0)\right),
\end{equation*}
valid for small $\lambda/N$ via first-order Taylor expansion. Crucially, the method requires only forward evaluations of $r$, avoiding the cost of backpropagating through sampling chains or solving adjoint equations.

We validated the approach on a two-dimensional Gaussian mixture with linear reward, where the exact tilted distribution is available in closed form. The experiments confirmed that the method successfully recovers the target distribution for $N\in\{20,50,100,200\}$, with score error (RMSE) converging at each tilt and runtime scaling linearly with $N$.

Compared to DRaFT~\citep{clark2024} and stochastic optimal control~\citep{domingo-enrich2024a}, which both require differentiating through the reward, iterative tilting trades a single gradient-based update for a sequence of gradient-free updates. This makes it well suited to settings where $r$ is a large pretrained model or is only available through black-box evaluations.

Several directions remain open: reducing variance in the single-sample covariance approximation, establishing principled criteria for selecting $N$, extending to high-dimensional domains with learned reward models, and combining with parameter-efficient methods such as LoRA~\citep{hu2021}.

\section*{Acknowledgments}
Jean Pachebat acknowledges support from the Chair Stress Test, Risk Management and Financial Steering, led by the French \'Ecole polytechnique and its foundation and sponsored by BNP Paribas.
Jean Pachebat also acknowledges access to the Cholesky HPC cluster operated by the IDCS unit at Institut Polytechnique de Paris.

\bibliography{bib/ftdiffusion}

@article{anderson1982,
  title = {Reverse-time diffusion equation models},
  author = {Anderson, Brian D.O.},
  year = 1982,
  month = may,
  journal = {Stochastic Processes and their Applications},
  volume = {12},
  number = {3},
  pages = {313--326},
  publisher = {Elsevier BV},
  issn = {0304-4149},
  doi = {10.1016/0304-4149(82)90051-5},
  urldate = {2025-09-22},
  copyright = {https://www.elsevier.com/tdm/userlicense/1.0/},
  langid = {english}
}

@misc{black2024a,
  title = {Training {{Diffusion Models}} with {{Reinforcement Learning}}},
  author = {Black, Kevin and Janner, Michael and Du, Yilun and Kostrikov, Ilya and Levine, Sergey},
  year = 2024,
  month = jan,
  number = {arXiv:2305.13301},
  eprint = {2305.13301},
  primaryclass = {cs},
  publisher = {arXiv},
  doi = {10.48550/arXiv.2305.13301},
  urldate = {2025-09-18},
  abstract = {Diffusion models are a class of flexible generative models trained with an approximation to the log-likelihood objective. However, most use cases of diffusion models are not concerned with likelihoods, but instead with downstream objectives such as human-perceived image quality or drug effectiveness. In this paper, we investigate reinforcement learning methods for directly optimizing diffusion models for such objectives. We describe how posing denoising as a multi-step decision-making problem enables a class of policy gradient algorithms, which we refer to as denoising diffusion policy optimization (DDPO), that are more effective than alternative reward-weighted likelihood approaches. Empirically, DDPO is able to adapt text-to-image diffusion models to objectives that are difficult to express via prompting, such as image compressibility, and those derived from human feedback, such as aesthetic quality. Finally, we show that DDPO can improve prompt-image alignment using feedback from a vision-language model without the need for additional data collection or human annotation. The project's website can be found at http://rl-diffusion.github.io .},
  archiveprefix = {arXiv},
  keywords = {Computer Science - Artificial Intelligence,Computer Science - Computer Vision and Pattern Recognition,Computer Science - Machine Learning}
}

@misc{clark2024,
  title = {Directly {{Fine-Tuning Diffusion Models}} on {{Differentiable Rewards}}},
  author = {Clark, Kevin and Vicol, Paul and Swersky, Kevin and Fleet, David J.},
  year = 2024,
  month = jun,
  number = {arXiv:2309.17400},
  eprint = {2309.17400},
  primaryclass = {cs},
  publisher = {arXiv},
  doi = {10.48550/arXiv.2309.17400},
  urldate = {2025-09-22},
  abstract = {We present Direct Reward Fine-Tuning (DRaFT), a simple and effective method for fine-tuning diffusion models to maximize differentiable reward functions, such as scores from human preference models. We first show that it is possible to backpropagate the reward function gradient through the full sampling procedure, and that doing so achieves strong performance on a variety of rewards, outperforming reinforcement learning-based approaches. We then propose more efficient variants of DRaFT: DRaFT-K, which truncates backpropagation to only the last K steps of sampling, and DRaFT-LV, which obtains lower-variance gradient estimates for the case when K=1. We show that our methods work well for a variety of reward functions and can be used to substantially improve the aesthetic quality of images generated by Stable Diffusion 1.4. Finally, we draw connections between our approach and prior work, providing a unifying perspective on the design space of gradient-based fine-tuning algorithms.},
  archiveprefix = {arXiv},
  keywords = {Computer Science - Computer Vision and Pattern Recognition,Computer Science - Machine Learning}
}

@misc{domingo-enrich2024a,
  title = {Stochastic {{Optimal Control Matching}}},
  author = {{Domingo-Enrich}, Carles and Han, Jiequn and Amos, Brandon and Bruna, Joan and Chen, Ricky T. Q.},
  year = 2024,
  month = oct,
  number = {arXiv:2312.02027},
  eprint = {2312.02027},
  primaryclass = {math},
  publisher = {arXiv},
  doi = {10.48550/arXiv.2312.02027},
  urldate = {2025-09-22},
  abstract = {Stochastic optimal control, which has the goal of driving the behavior of noisy systems, is broadly applicable in science, engineering and artificial intelligence. Our work introduces Stochastic Optimal Control Matching (SOCM), a novel Iterative Diffusion Optimization (IDO) technique for stochastic optimal control that stems from the same philosophy as the conditional score matching loss for diffusion models. That is, the control is learned via a least squares problem by trying to fit a matching vector field. The training loss, which is closely connected to the cross-entropy loss, is optimized with respect to both the control function and a family of reparameterization matrices which appear in the matching vector field. The optimization with respect to the reparameterization matrices aims at minimizing the variance of the matching vector field. Experimentally, our algorithm achieves lower error than all the existing IDO techniques for stochastic optimal control for three out of four control problems, in some cases by an order of magnitude. The key idea underlying SOCM is the path-wise reparameterization trick, a novel technique that may be of independent interest. Code at https://github.com/facebookresearch/SOC-matching},
  archiveprefix = {arXiv},
  keywords = {Computer Science - Machine Learning,Computer Science - Numerical Analysis,Mathematics - Numerical Analysis,Mathematics - Optimization and Control,Mathematics - Probability,Statistics - Machine Learning}
}

@misc{domingo-enrich2024b,
  title = {Adjoint {{Matching}}: {{Fine-tuning Flow}} and {{Diffusion Generative Models}} with {{Memoryless Stochastic Optimal Control}}},
  shorttitle = {Adjoint {{Matching}}},
  author = {{Domingo-Enrich}, Carles and Drozdzal, Michal and Karrer, Brian and Chen, Ricky T. Q.},
  year = 2024,
  month = jan,
  number = {arXiv:2409.08861},
  eprint = {2409.08861},
  primaryclass = {cs},
  publisher = {arXiv},
  doi = {10.48550/arXiv.2409.08861},
  urldate = {2025-09-22},
  abstract = {Dynamical generative models that produce samples through an iterative process, such as Flow Matching and denoising diffusion models, have seen widespread use, but there have not been many theoretically-sound methods for improving these models with reward fine-tuning. In this work, we cast reward fine-tuning as stochastic optimal control (SOC). Critically, we prove that a very specific memoryless noise schedule must be enforced during fine-tuning, in order to account for the dependency between the noise variable and the generated samples. We also propose a new algorithm named Adjoint Matching which outperforms existing SOC algorithms, by casting SOC problems as a regression problem. We find that our approach significantly improves over existing methods for reward fine-tuning, achieving better consistency, realism, and generalization to unseen human preference reward models, while retaining sample diversity.},
  archiveprefix = {arXiv},
  keywords = {Computer Science - Machine Learning,Mathematics - Optimization and Control,Statistics - Machine Learning}
}

@misc{esser2024,
  title = {Scaling {{Rectified Flow Transformers}} for {{High-Resolution Image Synthesis}}},
  author = {Esser, Patrick and Kulal, Sumith and Blattmann, Andreas and Entezari, Rahim and M{\"u}ller, Jonas and Saini, Harry and Levi, Yam and Lorenz, Dominik and Sauer, Axel and Boesel, Frederic and Podell, Dustin and Dockhorn, Tim and English, Zion and Lacey, Kyle and Goodwin, Alex and Marek, Yannik and Rombach, Robin},
  year = 2024,
  month = mar,
  number = {arXiv:2403.03206},
  eprint = {2403.03206},
  primaryclass = {cs},
  publisher = {arXiv},
  doi = {10.48550/arXiv.2403.03206},
  urldate = {2025-09-22},
  abstract = {Diffusion models create data from noise by inverting the forward paths of data towards noise and have emerged as a powerful generative modeling technique for high-dimensional, perceptual data such as images and videos. Rectified flow is a recent generative model formulation that connects data and noise in a straight line. Despite its better theoretical properties and conceptual simplicity, it is not yet decisively established as standard practice. In this work, we improve existing noise sampling techniques for training rectified flow models by biasing them towards perceptually relevant scales. Through a large-scale study, we demonstrate the superior performance of this approach compared to established diffusion formulations for high-resolution text-to-image synthesis. Additionally, we present a novel transformer-based architecture for text-to-image generation that uses separate weights for the two modalities and enables a bidirectional flow of information between image and text tokens, improving text comprehension, typography, and human preference ratings. We demonstrate that this architecture follows predictable scaling trends and correlates lower validation loss to improved text-to-image synthesis as measured by various metrics and human evaluations. Our largest models outperform state-of-the-art models, and we will make our experimental data, code, and model weights publicly available.},
  archiveprefix = {arXiv},
  keywords = {Computer Science - Computer Vision and Pattern Recognition}
}

@misc{fan2023,
  title = {{{DPOK}}: {{Reinforcement Learning}} for {{Fine-tuning Text-to-Image Diffusion Models}}},
  shorttitle = {{{DPOK}}},
  author = {Fan, Ying and Watkins, Olivia and Du, Yuqing and Liu, Hao and Ryu, Moonkyung and Boutilier, Craig and Abbeel, Pieter and Ghavamzadeh, Mohammad and Lee, Kangwook and Lee, Kimin},
  year = 2023,
  month = nov,
  number = {arXiv:2305.16381},
  eprint = {2305.16381},
  primaryclass = {cs},
  publisher = {arXiv},
  doi = {10.48550/arXiv.2305.16381},
  urldate = {2025-09-22},
  abstract = {Learning from human feedback has been shown to improve text-to-image models. These techniques first learn a reward function that captures what humans care about in the task and then improve the models based on the learned reward function. Even though relatively simple approaches (e.g., rejection sampling based on reward scores) have been investigated, fine-tuning text-to-image models with the reward function remains challenging. In this work, we propose using online reinforcement learning (RL) to fine-tune text-to-image models. We focus on diffusion models, defining the fine-tuning task as an RL problem, and updating the pre-trained text-to-image diffusion models using policy gradient to maximize the feedback-trained reward. Our approach, coined DPOK, integrates policy optimization with KL regularization. We conduct an analysis of KL regularization for both RL fine-tuning and supervised fine-tuning. In our experiments, we show that DPOK is generally superior to supervised fine-tuning with respect to both image-text alignment and image quality. Our code is available at https://github.com/google-research/google-research/tree/master/dpok.},
  archiveprefix = {arXiv},
  keywords = {Computer Science - Computer Vision and Pattern Recognition,Computer Science - Machine Learning}
}

@misc{ho2020,
  title = {Denoising {{Diffusion Probabilistic Models}}},
  author = {Ho, Jonathan and Jain, Ajay and Abbeel, Pieter},
  year = 2020,
  month = dec,
  number = {arXiv:2006.11239},
  eprint = {2006.11239},
  primaryclass = {cs},
  publisher = {arXiv},
  doi = {10.48550/arXiv.2006.11239},
  urldate = {2025-09-22},
  abstract = {We present high quality image synthesis results using diffusion probabilistic models, a class of latent variable models inspired by considerations from nonequilibrium thermodynamics. Our best results are obtained by training on a weighted variational bound designed according to a novel connection between diffusion probabilistic models and denoising score matching with Langevin dynamics, and our models naturally admit a progressive lossy decompression scheme that can be interpreted as a generalization of autoregressive decoding. On the unconditional CIFAR10 dataset, we obtain an Inception score of 9.46 and a state-of-the-art FID score of 3.17. On 256x256 LSUN, we obtain sample quality similar to ProgressiveGAN. Our implementation is available at https://github.com/hojonathanho/diffusion},
  archiveprefix = {arXiv},
  keywords = {Computer Science - Machine Learning,Statistics - Machine Learning}
}

@misc{ho2022,
  title = {Classifier-{{Free Diffusion Guidance}}},
  author = {Ho, Jonathan and Salimans, Tim},
  year = 2022,
  month = jul,
  number = {arXiv:2207.12598},
  eprint = {2207.12598},
  primaryclass = {cs},
  publisher = {arXiv},
  doi = {10.48550/arXiv.2207.12598},
  urldate = {2025-09-22},
  abstract = {Classifier guidance is a recently introduced method to trade off mode coverage and sample fidelity in conditional diffusion models post training, in the same spirit as low temperature sampling or truncation in other types of generative models. Classifier guidance combines the score estimate of a diffusion model with the gradient of an image classifier and thereby requires training an image classifier separate from the diffusion model. It also raises the question of whether guidance can be performed without a classifier. We show that guidance can be indeed performed by a pure generative model without such a classifier: in what we call classifier-free guidance, we jointly train a conditional and an unconditional diffusion model, and we combine the resulting conditional and unconditional score estimates to attain a trade-off between sample quality and diversity similar to that obtained using classifier guidance.},
  archiveprefix = {arXiv},
  keywords = {Computer Science - Artificial Intelligence,Computer Science - Machine Learning}
}

@misc{hu2021,
  title = {{{LoRA}}: {{Low-Rank Adaptation}} of {{Large Language Models}}},
  shorttitle = {{{LoRA}}},
  author = {Hu, Edward J. and Shen, Yelong and Wallis, Phillip and {Allen-Zhu}, Zeyuan and Li, Yuanzhi and Wang, Shean and Wang, Lu and Chen, Weizhu},
  year = 2021,
  month = oct,
  number = {arXiv:2106.09685},
  eprint = {2106.09685},
  primaryclass = {cs},
  publisher = {arXiv},
  doi = {10.48550/arXiv.2106.09685},
  urldate = {2025-09-22},
  abstract = {An important paradigm of natural language processing consists of large-scale pre-training on general domain data and adaptation to particular tasks or domains. As we pre-train larger models, full fine-tuning, which retrains all model parameters, becomes less feasible. Using GPT-3 175B as an example -- deploying independent instances of fine-tuned models, each with 175B parameters, is prohibitively expensive. We propose Low-Rank Adaptation, or LoRA, which freezes the pre-trained model weights and injects trainable rank decomposition matrices into each layer of the Transformer architecture, greatly reducing the number of trainable parameters for downstream tasks. Compared to GPT-3 175B fine-tuned with Adam, LoRA can reduce the number of trainable parameters by 10,000 times and the GPU memory requirement by 3 times. LoRA performs on-par or better than fine-tuning in model quality on RoBERTa, DeBERTa, GPT-2, and GPT-3, despite having fewer trainable parameters, a higher training throughput, and, unlike adapters, no additional inference latency. We also provide an empirical investigation into rank-deficiency in language model adaptation, which sheds light on the efficacy of LoRA. We release a package that facilitates the integration of LoRA with PyTorch models and provide our implementations and model checkpoints for RoBERTa, DeBERTa, and GPT-2 at https://github.com/microsoft/LoRA.},
  archiveprefix = {arXiv},
  keywords = {Computer Science - Artificial Intelligence,Computer Science - Computation and Language,Computer Science - Machine Learning}
}

@misc{lipman2023,
  title = {Flow {{Matching}} for {{Generative Modeling}}},
  author = {Lipman, Yaron and Chen, Ricky T. Q. and {Ben-Hamu}, Heli and Nickel, Maximilian and Le, Matt},
  year = 2023,
  month = feb,
  number = {arXiv:2210.02747},
  eprint = {2210.02747},
  primaryclass = {cs},
  publisher = {arXiv},
  doi = {10.48550/arXiv.2210.02747},
  urldate = {2025-09-22},
  abstract = {We introduce a new paradigm for generative modeling built on Continuous Normalizing Flows (CNFs), allowing us to train CNFs at unprecedented scale. Specifically, we present the notion of Flow Matching (FM), a simulation-free approach for training CNFs based on regressing vector fields of fixed conditional probability paths. Flow Matching is compatible with a general family of Gaussian probability paths for transforming between noise and data samples---which subsumes existing diffusion paths as specific instances. Interestingly, we find that employing FM with diffusion paths results in a more robust and stable alternative for training diffusion models. Furthermore, Flow Matching opens the door to training CNFs with other, non-diffusion probability paths. An instance of particular interest is using Optimal Transport (OT) displacement interpolation to define the conditional probability paths. These paths are more efficient than diffusion paths, provide faster training and sampling, and result in better generalization. Training CNFs using Flow Matching on ImageNet leads to consistently better performance than alternative diffusion-based methods in terms of both likelihood and sample quality, and allows fast and reliable sample generation using off-the-shelf numerical ODE solvers.},
  archiveprefix = {arXiv},
  langid = {english},
  keywords = {Computer Science - Artificial Intelligence,Computer Science - Machine Learning,Statistics - Machine Learning}
}

@misc{nichol2021a,
  title = {Improved {{Denoising Diffusion Probabilistic Models}}},
  author = {Nichol, Alex and Dhariwal, Prafulla},
  year = 2021,
  month = feb,
  number = {arXiv:2102.09672},
  eprint = {2102.09672},
  primaryclass = {cs},
  publisher = {arXiv},
  doi = {10.48550/arXiv.2102.09672},
  urldate = {2025-10-27},
  abstract = {Denoising diffusion probabilistic models (DDPM) are a class of generative models which have recently been shown to produce excellent samples. We show that with a few simple modifications, DDPMs can also achieve competitive log-likelihoods while maintaining high sample quality. Additionally, we find that learning variances of the reverse diffusion process allows sampling with an order of magnitude fewer forward passes with a negligible difference in sample quality, which is important for the practical deployment of these models. We additionally use precision and recall to compare how well DDPMs and GANs cover the target distribution. Finally, we show that the sample quality and likelihood of these models scale smoothly with model capacity and training compute, making them easily scalable. We release our code at https://github.com/openai/improved-diffusion},
  archiveprefix = {arXiv},
  keywords = {Computer Science - Artificial Intelligence,Computer Science - Machine Learning,Statistics - Machine Learning}
}

@misc{sohl-dickstein2015,
  title = {Deep {{Unsupervised Learning}} using {{Nonequilibrium Thermodynamics}}},
  author = {{Sohl-Dickstein}, Jascha and Weiss, Eric A. and Maheswaranathan, Niru and Ganguli, Surya},
  year = 2015,
  month = nov,
  number = {arXiv:1503.03585},
  eprint = {1503.03585},
  primaryclass = {cs},
  publisher = {arXiv},
  doi = {10.48550/arXiv.1503.03585},
  urldate = {2025-09-22},
  abstract = {A central problem in machine learning involves modeling complex data-sets using highly flexible families of probability distributions in which learning, sampling, inference, and evaluation are still analytically or computationally tractable. Here, we develop an approach that simultaneously achieves both flexibility and tractability. The essential idea, inspired by non-equilibrium statistical physics, is to systematically and slowly destroy structure in a data distribution through an iterative forward diffusion process. We then learn a reverse diffusion process that restores structure in data, yielding a highly flexible and tractable generative model of the data. This approach allows us to rapidly learn, sample from, and evaluate probabilities in deep generative models with thousands of layers or time steps, as well as to compute conditional and posterior probabilities under the learned model. We additionally release an open source reference implementation of the algorithm.},
  archiveprefix = {arXiv},
  keywords = {Computer Science - Machine Learning,Condensed Matter - Disordered Systems and Neural Networks,Quantitative Biology - Neurons and Cognition,Statistics - Machine Learning}
}

@misc{song2021,
  title = {Score-{{Based Generative Modeling}} through {{Stochastic Differential Equations}}},
  author = {Song, Yang and {Sohl-Dickstein}, Jascha and Kingma, Diederik P. and Kumar, Abhishek and Ermon, Stefano and Poole, Ben},
  year = 2021,
  month = feb,
  number = {arXiv:2011.13456},
  eprint = {2011.13456},
  primaryclass = {cs},
  publisher = {arXiv},
  doi = {10.48550/arXiv.2011.13456},
  urldate = {2025-09-22},
  abstract = {Creating noise from data is easy; creating data from noise is generative modeling. We present a stochastic differential equation (SDE) that smoothly transforms a complex data distribution to a known prior distribution by slowly injecting noise, and a corresponding reverse-time SDE that transforms the prior distribution back into the data distribution by slowly removing the noise. Crucially, the reverse-time SDE depends only on the time-dependent gradient field (\textbackslash aka, score) of the perturbed data distribution. By leveraging advances in score-based generative modeling, we can accurately estimate these scores with neural networks, and use numerical SDE solvers to generate samples. We show that this framework encapsulates previous approaches in score-based generative modeling and diffusion probabilistic modeling, allowing for new sampling procedures and new modeling capabilities. In particular, we introduce a predictor-corrector framework to correct errors in the evolution of the discretized reverse-time SDE. We also derive an equivalent neural ODE that samples from the same distribution as the SDE, but additionally enables exact likelihood computation, and improved sampling efficiency. In addition, we provide a new way to solve inverse problems with score-based models, as demonstrated with experiments on class-conditional generation, image inpainting, and colorization. Combined with multiple architectural improvements, we achieve record-breaking performance for unconditional image generation on CIFAR-10 with an Inception score of 9.89 and FID of 2.20, a competitive likelihood of 2.99 bits/dim, and demonstrate high fidelity generation of 1024 x 1024 images for the first time from a score-based generative model.},
  archiveprefix = {arXiv},
  keywords = {Computer Science - Machine Learning,Statistics - Machine Learning}
}

@misc{tancik2020,
  title = {Fourier {{Features Let Networks Learn High Frequency Functions}} in {{Low Dimensional Domains}}},
  author = {Tancik, Matthew and Srinivasan, Pratul P. and Mildenhall, Ben and {Fridovich-Keil}, Sara and Raghavan, Nithin and Singhal, Utkarsh and Ramamoorthi, Ravi and Barron, Jonathan T. and Ng, Ren},
  year = 2020,
  month = jun,
  number = {arXiv:2006.10739},
  eprint = {2006.10739},
  primaryclass = {cs},
  publisher = {arXiv},
  doi = {10.48550/arXiv.2006.10739},
  urldate = {2025-10-27},
  abstract = {We show that passing input points through a simple Fourier feature mapping enables a multilayer perceptron (MLP) to learn high-frequency functions in low-dimensional problem domains. These results shed light on recent advances in computer vision and graphics that achieve state-of-the-art results by using MLPs to represent complex 3D objects and scenes. Using tools from the neural tangent kernel (NTK) literature, we show that a standard MLP fails to learn high frequencies both in theory and in practice. To overcome this spectral bias, we use a Fourier feature mapping to transform the effective NTK into a stationary kernel with a tunable bandwidth. We suggest an approach for selecting problem-specific Fourier features that greatly improves the performance of MLPs for low-dimensional regression tasks relevant to the computer vision and graphics communities.},
  archiveprefix = {arXiv},
  keywords = {Computer Science - Computer Vision and Pattern Recognition,Computer Science - Machine Learning}
}

@misc{tang2024,
  title = {Fine-tuning of diffusion models via stochastic control: entropy regularization and beyond},
  shorttitle = {Fine-tuning of diffusion models via stochastic control},
  author = {Tang, Wenpin},
  year = 2024,
  month = mar,
  number = {arXiv:2403.06279},
  eprint = {2403.06279},
  primaryclass = {math},
  publisher = {arXiv},
  doi = {10.48550/arXiv.2403.06279},
  urldate = {2025-09-22},
  abstract = {This paper aims to develop and provide a rigorous treatment to the problem of entropy regularized fine-tuning in the context of continuous-time diffusion models, which was recently proposed by Uehara et al. (arXiv:2402.15194, 2024). The idea is to use stochastic control for sample generation, where the entropy regularizer is introduced to mitigate reward collapse. We also show how the analysis can be extended to fine-tuning involving a general \$f\$-divergence regularizer.},
  archiveprefix = {arXiv},
  keywords = {Computer Science - Machine Learning,Mathematics - Optimization and Control}
}

@misc{uehara2024b,
  title = {Fine-{{Tuning}} of {{Continuous-Time Diffusion Models}} as {{Entropy-Regularized Control}}},
  author = {Uehara, Masatoshi and Zhao, Yulai and Black, Kevin and Hajiramezanali, Ehsan and Scalia, Gabriele and Diamant, Nathaniel Lee and Tseng, Alex M. and Biancalani, Tommaso and Levine, Sergey},
  year = 2024,
  month = feb,
  number = {arXiv:2402.15194},
  eprint = {2402.15194},
  primaryclass = {cs},
  publisher = {arXiv},
  doi = {10.48550/arXiv.2402.15194},
  urldate = {2025-09-22},
  abstract = {Diffusion models excel at capturing complex data distributions, such as those of natural images and proteins. While diffusion models are trained to represent the distribution in the training dataset, we often are more concerned with other properties, such as the aesthetic quality of the generated images or the functional properties of generated proteins. Diffusion models can be finetuned in a goal-directed way by maximizing the value of some reward function (e.g., the aesthetic quality of an image). However, these approaches may lead to reduced sample diversity, significant deviations from the training data distribution, and even poor sample quality due to the exploitation of an imperfect reward function. The last issue often occurs when the reward function is a learned model meant to approximate a ground-truth "genuine" reward, as is the case in many practical applications. These challenges, collectively termed "reward collapse," pose a substantial obstacle. To address this reward collapse, we frame the finetuning problem as entropy-regularized control against the pretrained diffusion model, i.e., directly optimizing entropy-enhanced rewards with neural SDEs. We present theoretical and empirical evidence that demonstrates our framework is capable of efficiently generating diverse samples with high genuine rewards, mitigating the overoptimization of imperfect reward models.},
  archiveprefix = {arXiv},
  keywords = {Computer Science - Artificial Intelligence,Computer Science - Machine Learning,Statistics - Machine Learning}
}

@article{vincent2011,
  title = {A {{Connection Between Score Matching}} and {{Denoising Autoencoders}}},
  author = {Vincent, Pascal},
  year = 2011,
  month = jul,
  journal = {Neural Computation},
  volume = {23},
  number = {7},
  pages = {1661--1674},
  publisher = {MIT Press},
  issn = {0899-7667, 1530-888X},
  doi = {10.1162/neco_a_00142},
  urldate = {2025-09-22},
  abstract = {Denoising autoencoders have been previously shown to be competitive alternatives to restricted Boltzmann machines for unsupervised pretraining of each layer of a deep architecture. We show that a simple denoising autoencoder training criterion is equivalent to matching the score (with respect to the data) of a specific energy-based model to that of a nonparametric Parzen density estimator of the data. This yields several useful insights. It defines a proper probabilistic model for the denoising autoencoder technique, which makes it in principle possible to sample from them or rank examples by their energy. It suggests a different way to apply score matching that is related to learning to denoise and does not require computing second derivatives. It justifies the use of tied weights between the encoder and decoder and suggests ways to extend the success of denoising autoencoders to a larger family of energy-based models.},
  langid = {english}
}

@misc{xu2023b,
  title = {{{ImageReward}}: {{Learning}} and {{Evaluating Human Preferences}} for {{Text-to-Image Generation}}},
  shorttitle = {{{ImageReward}}},
  author = {Xu, Jiazheng and Liu, Xiao and Wu, Yuchen and Tong, Yuxuan and Li, Qinkai and Ding, Ming and Tang, Jie and Dong, Yuxiao},
  year = 2023,
  month = dec,
  number = {arXiv:2304.05977},
  eprint = {2304.05977},
  primaryclass = {cs},
  publisher = {arXiv},
  doi = {10.48550/arXiv.2304.05977},
  urldate = {2025-04-28},
  abstract = {We present a comprehensive solution to learn and improve text-to-image models from human preference feedback. To begin with, we build ImageReward -- the first general-purpose text-to-image human preference reward model -- to effectively encode human preferences. Its training is based on our systematic annotation pipeline including rating and ranking, which collects 137k expert comparisons to date. In human evaluation, ImageReward outperforms existing scoring models and metrics, making it a promising automatic metric for evaluating text-to-image synthesis. On top of it, we propose Reward Feedback Learning (ReFL), a direct tuning algorithm to optimize diffusion models against a scorer. Both automatic and human evaluation support ReFL's advantages over compared methods. All code and datasets are provided at \textbackslash url\textbraceleft https://github.com/THUDM/ImageReward\textbraceright.},
  archiveprefix = {arXiv},
  keywords = {Computer Science - Computer Vision and Pattern Recognition,Computer Science - Machine Learning}
}

@inproceedings{dhariwal2021,
  title = {Diffusion Models Beat {{GANs}} on Image Synthesis},
  author = {Dhariwal, Prafulla and Nichol, Alexander},
  year = 2021,
  booktitle = {Advances in Neural Information Processing Systems},
  volume = {34},
  pages = {8780--8794},
  publisher = {Curran Associates, Inc.},
  editor = {Ranzato, M. and Beygelzimer, A. and Dauphin, Y. and Liang, P.S. and Vaughan, J. Wortman}
}

@article{hyvarinen2005,
  title = {Estimation of Non-Normalized Statistical Models by Score Matching},
  author = {Hyv{\"a}rinen, Aapo},
  year = 2005,
  journal = {Journal of Machine Learning Research},
  volume = {6},
  pages = {695--709}
}

@inproceedings{song2019,
  title = {Generative Modeling by Estimating Gradients of the Data Distribution},
  author = {Song, Yang and Ermon, Stefano},
  year = 2019,
  booktitle = {Advances in Neural Information Processing Systems},
  volume = {32},
  publisher = {Curran Associates, Inc.},
  editor = {Wallach, H. and Larochelle, H. and Beygelzimer, A. and d'Alch\'{e}-Buc, F. and Fox, E. and Garnett, R.}
}

@inproceedings{song2021ddim,
  title = {Denoising Diffusion Implicit Models},
  author = {Song, Jiaming and Meng, Chenlin and Ermon, Stefano},
  year = 2021,
  booktitle = {International Conference on Learning Representations}
}

@inproceedings{anonymous2025tilt,
  title = {Tilt Matching for Scalable Sampling and Fine-Tuning},
  author = {Anonymous},
  year = 2025,
  booktitle = {Submitted to The Fourteenth International Conference on Learning Representations},
  url = {https://openreview.net/forum?id=tT7CXL3I9C},
  note = {under review}
}

\end{document}